\documentclass[lettersize,journal]{IEEEtran}
\usepackage{amsmath,amsfonts}
\usepackage[noend]{algorithmic}
\usepackage{algorithm}
\usepackage{array}
\usepackage[caption=false,labelfont=sf,textfont=sf]{subfig}
\usepackage{textcomp}
\usepackage{stfloats}
\usepackage{url}
\usepackage{slashbox}
\usepackage{booktabs}
\usepackage{xcolor}
\usepackage{verbatim}
\usepackage{graphicx}
\usepackage{cite}
 \usepackage{nameref}
\hyphenation{op-tical net-works semi-conduc-tor IEEE-Xplore}

\usepackage{graphicx}
\usepackage{amssymb}
\usepackage{mathtools}  
\usepackage{amsthm}
\usepackage{bbm}
\usepackage{tcolorbox}
\usepackage{pifont}
\newcommand{\cmark}{\ding{51}}%
\newcommand{\xmark}{\ding{55}}%
\newtheorem{theorem}{Theorem}

\newtheorem{definition}{Definition}

\usepackage{hyperref}
\begin{document}

\title{Regularization via $f$-Divergence: An Application to Multi-Oxide Spectroscopic Analysis}

\author{Weizhi Li,~\IEEEmembership{Member,~IEEE,}
        Natalie Klein,
        Brendan Gifford,
        Elizabeth Sklute,
        Carey Legett,
        Samuel Clegg
\thanks{W. Li and N. Klein are with Statistical Sciences group at Los Alamos National Laboratory, Los Alamos, New Mexico,
USA. (email: weizhili@lanl.gov; neklein@lanl.gov)}
\thanks{B. Gifford is with Physics and Chemistry of Materials at Los Alamos National Laboratory, Los Alamos, New Mexico,
USA.}
\thanks{E. Sklute is with Analytical Earth Science at Los Alamos National Laboratory, Los Alamos, New Mexico, USA.}

\thanks{C. Legett is with Space Remote Sensing and Data Science at Los Alamos National Laboratory, Los Alamos, New Mexico, USA.}
\thanks{S. Clegg is with Physical Chemistry \& Applied Spectroscopy at Los Alamos National Laboratory, Los Alamos, New Mexico, USA.}
 }



\maketitle

\begin{abstract}
In this paper, we address the task of characterizing the chemical composition of planetary surfaces using convolutional neural networks (CNNs). Specifically, we seek to predict the multi-oxide weights of rock samples based on spectroscopic data collected under Martian conditions. We frame this problem as a multi-target regression task and propose a \textit{novel regularization method} based on $f$-divergence. The $f$-divergence regularization is designed to constrain the distributional discrepancy between predictions and noisy targets. This regularizer serves a dual purpose: on the one hand, it mitigates overfitting by enforcing a constraint on the distributional difference between predictions and noisy targets. On the other hand, it acts as an auxiliary loss function, penalizing the neural network when the divergence between the predicted and target distributions becomes too large. To enable backpropagation during neural network training, we develop a differentiable $f$-divergence and incorporate it into the $f$-divergence regularization, making the network training feasible. We conduct experiments using spectra collected in a Mars-like environment by the remote-sensing instruments aboard the \textit{Curiosity} and \textit{Perseverance} rovers. Experimental results on multi-oxide weight prediction demonstrate that the proposed $f$-divergence regularization performs better than or comparable to standard regularization methods including $L_1$, $L_2$, and dropout. Notably, combining the $f$-divergence regularization with these standard regularization further enhances performance, outperforming each regularization method used independently.
\end{abstract}

\begin{IEEEkeywords}
Oxide-weights, multi-response regression, neural network, regularization, $f$-divergence
\end{IEEEkeywords}

\section{Introduction}
\IEEEPARstart{C}{haracterizing} the chemical composition of planetary surfaces is a fundamental task in planetary science. For instance, analyzing the oxide composition of rock samples from the Martian environment can provide insights into past or present biological activity on Mars~\cite{hassler2014mars}. Reflecting the importance of planetary surface exploration, NASA launched the Mars rover Perseverance as part of its Mars 2020 mission to investigate the astrobiological environment on Mars. Since its successful landing in 2021, the rover has actively sought astrobiological evidence in rock samples using its laser-induced breakdown spectroscopy (LIBS) system. The LIBS instrument emits a laser pulse at a wavelength of 1067 nm onto rock surfaces, creating a hot plasma of vaporized material. Analyzing the light emitted by the plasma reveals the chemical composition of the rocks. Due to the large number of elements present in complexity of the LIBS plasma and the resultant LIBS spectrum, linking the chemical composition to spectral feature positions and intensities was initially a difficult task.
With advancements in machine learning (ML), planetary scientists have developed multiple computational tools to advance and automate the spectroscopic analysis of rock samples, leveraging efficient and highly predictive ML models to better understand extraterrestrial environments. One classical ML solution for this task is the partial least squares (PLS) method based on linear regression, a multivariate analysis approach introduced in~\cite{wold1984collinearity}. The core concept of PLS is to identify two latent spaces that exhibit a high correlation between the feature and target variables. Subsequently, a linear regression model is developed to address the prediction problem within these latent spaces. Since its inception, PLS has become a widely used tool in chemometrics, achieving notable success in automating spectroscopic analyses~\cite{cheng2017partial,janik2009prediction,aznar2003prediction,song2023fractional, song2013using}. For predicting multi-oxide compositions in rocks, the authors of~\cite{clegg2009multivariate} developed an ensemble regression model based on PLS and published their prediction results in the Planetary Data System (PDS)~\cite{mcmahon1996overview} as a benchmark dataset.
\begin{figure*}[h!]
    \centering
\subfloat[Training MSE: 0.13 (\textbf{Proper divergence}) Test MSE: \textbf{0.17}]{\includegraphics[width=0.32\textwidth]{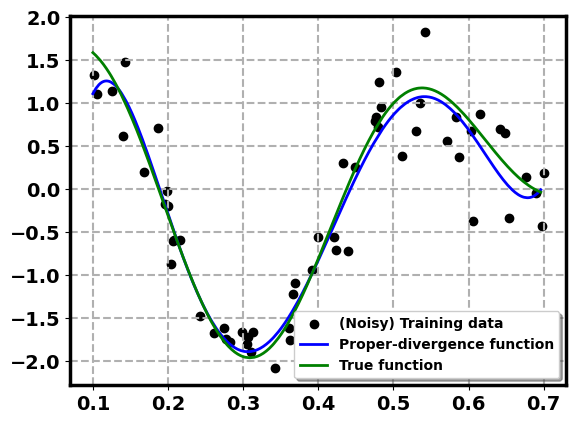}
\label{ProperDivergence}
}\hfill
\subfloat[Training MSE: 0.09 (Small divergence) Test MSE: 0.24]{\includegraphics[width=0.32\textwidth]{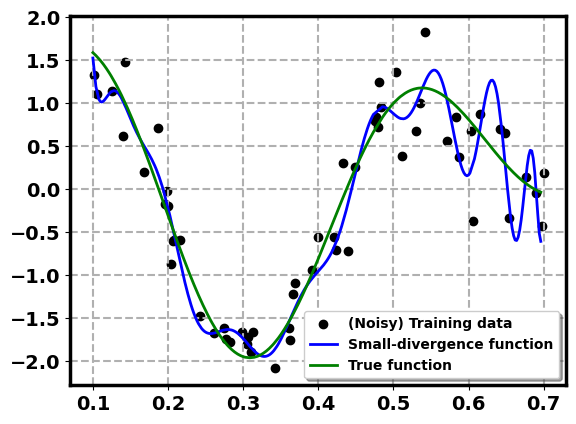}
\label{SmallDivergence}}\hfill
\subfloat[Training MSE: 0.88 (Large divergence) Test MSE: 1.02]{\includegraphics[width=0.32\textwidth]{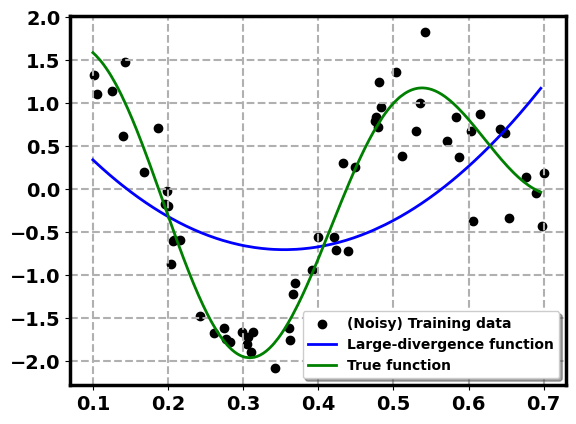}
\label{LargeDivergence}}
\caption{True function and various approximation functions are compared, along with their training and test mean squared errors (MSE). The training MSE \textit{implicitly} quantifies the divergence between the training targets and predictions. Consequently, the function approximation in (a), which maintains an appropriate level of divergence, achieves a smaller test error compared to the approximations in (b) and (c), where the divergences are too small and large, respectively.}
\label{FigVariousDivergence}
\end{figure*}
While PLS regression has demonstrated remarkable success in linking oxide compositions to LIBS spectral features, the LIBS spectral response of each material is often complicated by various matrix effects. As a result, PLS, being a linear model, may not be the most suitable machine learning approach for capturing the nonlinear relationships between LIBS spectra and the multi-oxide compositions found in rocks. Artificial intelligence (AI) methods, such as convolutional neural networks (CNNs), have gained widespread success in solving complex tasks, including face recognition, autonomous driving, and medical image analysis. These tasks are considered highly challenging, yet AI systems based on CNNs have achieved or nearly achieved human-level intelligence in these domains~\cite{lu2019review, cocskun2017face, anwar2018medical}. This success primarily stems from the nonlinear convolutional structure of neural networks and the explosion of available data along with the huge advance in the computational power. Recognizing the limitations of the linearity in PLS, scientists have developed multivariate analysis techniques using CNNs. Studies such as~\cite{acquarelli2017convolutional, shen2021automated, ghosh2019deep} showcase CNN-based machine learning models for spectroscopic analysis. However, the insufficiency of data is arguably the biggest obstacle preventing the seamless application of CNNs to spectroscopic analysis. To address this challenge, regularization techniques such as $L_2$ regularization are employed during neural network training to mitigate overfitting when working with small datasets.

In this paper, we focus on the \textit{multi-target regression} task, predicting multi-oxide weights using a CNN~\cite{zhang2015character}. We assume a data model where the measured multi-oxide weights contain noise caused by ``random" factors, such as variance in the oxide measurement process or the highly variable extraterrestrial environment. This necessitates the need for regularization during network training to prevent the network from overfitting to the noisy measurements of oxide weights. To address this need,  we propose a novel regularization method, \textit{$f$-divergence regularization}, for multi-target regression tasks. Figure~\ref{FigVariousDivergence} illustrates a true (or target) function alongside three polynomial approximations, with their respective mean squared errors (MSE) computed over the training and test datasets.  As shown, a high-degree polynomial (Figure~\ref{FigVariousDivergence} (b)) with a low training MSE tends to overfit the noisy training data,  while a low-degree polynomial (Figure~\ref{FigVariousDivergence} (c)) with a high training MSE  underfits. Consequently, a polynomial approximation (Figure~\ref{FigVariousDivergence}(a)) with an intermediate degree  and training MSE aligns most closely with the target function. This indicates that a good function approximation requires not only an appropriate model complexity but also a suitable divergence between the model predictions and the training targets. This insight motivates the explicit enforcement of divergence between prediction  and target distributions during neural network training. To this end, we propose the $f$-divergence regularization,   which imposes a distributional difference quantified as $f$-divergence~\cite{renyi1961measures} between the network predictions and the targets.  This prevents the network from overfitting to the distribution of noisy targets—specifically, the multi-oxide weights considered in this paper. Unlike conventional regularizers, such as $L_2$ regularization~\cite{tikhonov1963solution} or dropout~\cite{wager2013dropout}, which act on network parameters or intermediate layers, $f$-divergence regularization explicitly regularizes the network output through the $f$-divergence between the network prediction and target variables. This approach bears a resemblance to the popular label smoothing regularization~\cite{li2020regularization, szegedy2016rethinking}, which smooths a one-hot vector of categorical targets in classification tasks. However, by explicitly enforcing a distributional difference, the $f$-divergence regularization also penalizes the network when the divergence between the prediction and target distributions exceeds a pre-defined threshold,  acting as an auxiliary loss function to prevent the significant deviation from the target distribution. 

Focusing on an important technical aspect of $f$-divergence regularization, we construct a differentiable estimation of the $f$-divergence, enabling feasible network training with backpropagation~\cite{rumelhart1986learning}. By leveraging a specific form of $f$-divergence proposed in~\cite{berisha2015empirically}, we ensure that the differentiable $f$-divergence is bounded between 0 and 1, greatly reducing the searching space of regularization strengths across various tasks. We conduct experiments using spectroscopic data collected in a simulated Martian environment by the ChemCam and SuperCam laboratory instruments~\cite{clegg2017recalibration}, which are remote-sensing instrument suited on the Mars rover \textit{Curiosity} and \textit{Perseverance}. Experimental results show that training with the $f$-divergence regularization yields root mean squared errors (RMSEs) significantly smaller or comparable to those achieved by $L_1$, $L_2$ and dropout regularization. Notably, combining $f$-divergence regularization with $L_1$, $L_2$ or dropout further reduces RMSEs, outperforming the independent use of these standard regularization methods when applied independently.

There is a body of work~\cite{daunas2023empirical, cheng2020posterior, zhong2023learning} that applies $f$-divergence to ML problems by incorporating it into the loss function, serving various distinct purposes. For instance, the authors of~\cite{daunas2023empirical} consider finding a parametric data model that can best explains observed data, using $f$-divergence between the data and a reference model as inductive bias. Additionally,~\cite{cheng2020posterior} measures the $f$-divergence between true and adversarial data with the aim to improve the adversarial robustness. Moreover,~\cite{zhong2023learning} applies $f$-divergence to measure the fairness of a classifier to reduce the bias of the classifier. In contrast, our work uses $f$-divergence to measure the distance between the neural network's predictions and noisy targets. This proposed $f$-divergence regularization is designed specifically to mitigate overfitting to noisy targets, distinguishing our approach from these prior studies.
 
\section{Preliminaries}
\subsection{Data-Generating Process}
We write $\mathbf{x}\in\mathbb{R}^{d_1}$ and $\mathbf{y}\in\mathbb{R}^{d_2}$ to denote a $d_1$-dimensional spectrum and the measured oxide weights of $d_2$ targets, respectively. We
assume the following data model for $\mathbf{x}$ and $\mathbf{y}$:
\begin{align}
    \mathbf{y}=g\left(\mathbf{x}\right) +\mathbf{\epsilon}
\label{datamodel}
\end{align}
where $\mathbf{\epsilon}\in\mathbb{R}^{d_2}$ represents the realization of \textit{random} measurement error for oxide weights and $g\left(\mathbf{x}\right)$ represents the true oxide weights given a spectrum $\mathbf{x}$. 
We \textit{assume} that one has acquired $\{\left(\mathbf{x}, \mathbf{y}\right)_i\}_{i=1}^N$, whose corresponding random variable (\textit{r.v.}) pairs $\left( \mathbf{X}, \mathbf{Y}\right)_i$ are \textit{i.i.d.}, and $\left(\mathbf{X}, \mathbf{Y}\right)\sim p_{\mathbf{X}\mathbf{Y}}\left(\mathbf{x}, \mathbf{y}\right)$. 
\subsection{Graph-Based Estimation of $f$-Divergence }
$f$-divergence $D_f\left(p_0 \| p_1\right)$, proposed in~\cite{renyi1961measures}, measures the difference between two probability distributions $p_0$ and $p_1$. Formally, it is defined as follows.
\begin{definition}($f$-divergence~\cite{renyi1961measures})
Let $p_0$ and $p_1$ be two probability distributions with same support $\mathcal{S}$. Suppose $f(t)$ is a  function such that (A) $f(1)=0$ and (B) $f$ is strictly convex around $1$. Then
\begin{align}
    D_f\left(p_0 \| p_1\right)=\int_\mathcal{S}f\left(\frac{p_0(\mathbf{s})}{p_1(\mathbf{s})}\right)p_1(\mathbf{s})d\mathbf{s}
\end{align}
\end{definition}
There is a body of work~\cite{moon2014multivariate, rubenstein2019practical, kanamori2011f, berisha2014empirical, henze1988multivariate} discussing the estimation of $D_f\left(p_0 \| p_1\right)$ with samples generated from $p_0$ and $p_1$. In particular,~\cite{henze1988multivariate, berisha2014empirical} proposed a graph-based estimation method that constructs a minimum graph such as a nearest neighbor graph over the samples generated from $p_0$ and $p_1$. Then, the ratio of cut-edges, which connect samples generated from different distributions, is used to approximate $D_f\left(p_0 \| p_1\right)$. Figure~\ref{FigNNGraph} illustrates the cut-edge number of a nearest neighbor graph constructed from samples of $p_0$ and $p_1$, indicating that the cut-edge ratio is inversely proportional to the $f$-divergence between $p_0$ and $p_1$.
\begin{figure}[h!]
    \centering
\subfloat[Large divergence]{\includegraphics[width=0.23\textwidth]{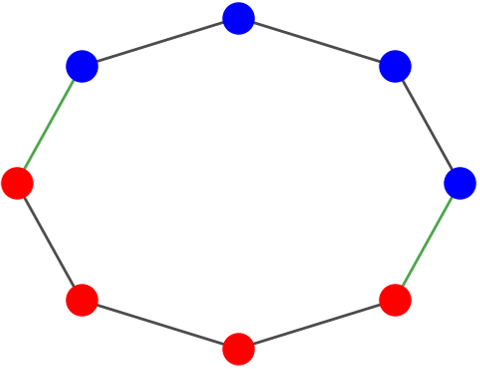}}\hfill
\subfloat[Small divergence]{\includegraphics[width=0.23\textwidth]{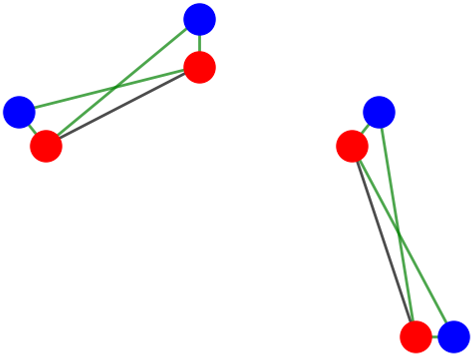}
}
\caption{(a) and (b) illustrate two scenarios of samples generated from $p_0$ and $p_1$. Samples generated from $p_0$ and $p_1$ are represented as red and blue nodes and a nearest neighbour graph is constructed over the nodes. Green edges denote the edges connecting nodes from different samples. In (a), the red nodes are farther from the blue nodes, resulting in a smaller cut-edge number, indicating a \textit{larger} $f$-divergence. In contrast, in (b), the red and blue nodes are closer, leading to a larger cut-edge number, which indicates a \textit{smaller} $f$-divergence.}
\label{FigNNGraph}
\end{figure} 
Combining Theorem 4.1 in~\cite{henze1988multivariate}, Theorem 1 in~\cite{berisha2014empirical} and Theorem 2  in~\cite{henze1999multivariate}, the asymptotic property of the cut-edge ratio is given by Theorem~\ref{Graphbasedestimation}.
\begin{theorem}(Asymptotic convergence of the cut-edge ratio) Let $n_0$ and $n_1$ denote the number of samples $\textit{i.i.d.}$ generated from $p_0$ and $p_1$, and let $T_n$ denote the cut-edge number \textit{r.v.} of the nearest neighbor graph constructed over the $n=n_0+n_1$ samples. Suppose $\lim_{n\to\infty}\frac{n_0}{n}=\alpha\in(0,1)$. Then, 
\begin{align}
  \lim_{n\to\infty}\frac{T_n}{n}=2\alpha \left(1-\alpha\right)\left(1 - D_f\left(p_0 \| p_1\right)\right)
\label{eq_dp}
\end{align}
with $f(t)=\frac{1}{4\alpha \left(1-\alpha\right)}\left(\frac{\left(\alpha t - \left(1-\alpha\right)\right)^2}{\alpha t + (1-\alpha)} - \left(2\alpha-1\right)^2\right)$. Furthermore, the resulting $D_f\left(p_0 \| p_1\right)$ satisfies: (A) $0 \leq D_f\left(p_0 \| p_1\right)\leq 1$ and (B)  $D_f\left(p_0 \| p_1\right)=D_f\left(p_1 \| p_0\right)$.
\label{Graphbasedestimation}
\end{theorem}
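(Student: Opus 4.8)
The plan is to prove the statement in two stages: first establish the almost-sure limit of the normalized cut-edge count $T_n/n$, and then read off the two stated properties of the limiting functional $D_f$. For the first stage I would lean directly on the cited asymptotics, namely Theorem 4.1 of~\cite{henze1988multivariate} together with its two-sample multivariate extension in Theorem 2 of~\cite{henze1999multivariate}, which give that the number of nearest-neighbor edges joining points from different samples, normalized by $n$, converges almost surely to a deterministic integral functional of $(p_0,p_1,\alpha)$. I would first reconstruct this limit heuristically to fix the integrand: writing $m(\mathbf{s})=\alpha p_0(\mathbf{s})+(1-\alpha)p_1(\mathbf{s})$ for the mixture density from which the pooled sample is effectively drawn, a point at $\mathbf{s}$ carries a $p_0$-label with conditional probability $\alpha p_0(\mathbf{s})/m(\mathbf{s})$ and a $p_1$-label with probability $(1-\alpha)p_1(\mathbf{s})/m(\mathbf{s})$, so a local nearest-neighbor edge is a cut edge exactly when its endpoints carry mismatched labels, with probability $2\alpha(1-\alpha)p_0(\mathbf{s})p_1(\mathbf{s})/m(\mathbf{s})^2$. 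Integrating this against the point density $m(\mathbf{s})$ gives the limit $2\alpha(1-\alpha)\int_{\mathcal{S}}\frac{p_0 p_1}{\alpha p_0+(1-\alpha)p_1}\,d\mathbf{s}$, and the stabilization/subadditivity machinery underlying Henze's theorems is what upgrades this heuristic to genuine almost-sure convergence.

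The second stage is the algebraic identification supplied by Theorem 1 of~\cite{berisha2014empirical}: I must show that with the stated generator $f$ one has $\int_{\mathcal{S}}\frac{p_0 p_1}{\alpha p_0+(1-\alpha)p_1}\,d\mathbf{s}=1-D_f(p_0\|p_1)$, so that the limit takes the claimed form $2\alpha(1-\alpha)(1-D_f)$. Substituting $t=p_0/p_1$ into $D_f(p_0\|p_1)=\int f(p_0/p_1)p_1\,d\mathbf{s}$ and clearing denominators reduces everything to the single identity $(\alpha p_0-(1-\alpha)p_1)^2=(\alpha p_0+(1-\alpha)p_1)^2-4\alpha(1-\alpha)p_0 p_1$; integrating term by term (using $\int p_0=\int p_1=1$) and simplifying with $1-(2\alpha-1)^2=4\alpha(1-\alpha)$ collapses $D_f$ to exactly $1-\int_{\mathcal{S}}\frac{p_0 p_1}{\alpha p_0+(1-\alpha)p_1}\,d\mathbf{s}$. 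Along the way I would confirm that $f$ is a legitimate generator, i.e.\ $f(1)=0$ (immediate from the identity above at $p_0=p_1$) and $f$ strictly convex near $1$.

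With the closed form $D_f=1-\int_{\mathcal{S}}\frac{p_0 p_1}{\alpha p_0+(1-\alpha)p_1}\,d\mathbf{s}$ in hand, both properties follow quickly. For (A), the integrand is nonnegative, so the integral is $\geq 0$ and hence $D_f\leq 1$; for $D_f\geq 0$ I would bound the integrand by the weighted AM--GM inequality $\alpha p_0+(1-\alpha)p_1\geq p_0^{\alpha}p_1^{1-\alpha}$, giving $\frac{p_0 p_1}{\alpha p_0+(1-\alpha)p_1}\leq p_0^{1-\alpha}p_1^{\alpha}$, and then apply H\"older's inequality with exponents $1/(1-\alpha)$ and $1/\alpha$ to conclude $\int p_0^{1-\alpha}p_1^{\alpha}\leq 1$ (alternatively, $D_f\geq 0$ is just the generic $f$-divergence bound via Jensen's inequality, since $f(1)=0$ and $f$ is convex). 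For (B), I would argue from the invariance of the cut-edge count under relabeling the two samples: exchanging $p_0\leftrightarrow p_1$ also exchanges the sample proportions $\alpha\leftrightarrow 1-\alpha$, under which the closed form---and therefore the divergence---is manifestly unchanged, which is the sense in which $D_f$ is symmetric.

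The step I expect to be the main obstacle is the first one: the rigorous almost-sure convergence of the normalized cut-edge ratio to the integral functional. The ``local mismatch probability'' computation is easy, but turning it into a theorem requires the full stabilization/subadditive-functional apparatus behind the cited Henze results, including control of boundary and long-range edge effects in the nearest-neighbor graph. By comparison, the algebraic identification of Stage two and the verification of properties (A) and (B) are routine once that limit is granted.
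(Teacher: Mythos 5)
Your proposal is correct, and it is more self-contained than the paper's own proof, which is a pure citation chain routed through the minimum spanning tree as an intermediate object: the paper invokes Theorem 4.1 of~\cite{henze1988multivariate} together with~\cite{henze1999multivariate} to equate the limiting cut-edge ratio of the nearest-neighbor graph with that of the MST, cites Theorem 1 of~\cite{berisha2014empirical} to identify the MST limit with $2\alpha(1-\alpha)\left(1-D_f(p_1\|p_0)\right)$, and cites Section A of~\cite{berisha2015empirically} for properties (A) and (B). You lean on the same Henze-type machinery for the only genuinely hard step --- the almost-sure convergence of $T_n/n$ to the functional $2\alpha(1-\alpha)\int p_0p_1/(\alpha p_0+(1-\alpha)p_1)\,d\mathbf{s}$; your own assessment that this step cannot be redone from scratch is accurate --- but you replace both Berisha--Hero citations with explicit computations, and these check out: the identity $(\alpha p_0-(1-\alpha)p_1)^2=(\alpha p_0+(1-\alpha)p_1)^2-4\alpha(1-\alpha)p_0p_1$ combined with $1-(2\alpha-1)^2=4\alpha(1-\alpha)$ does collapse $D_f(p_0\|p_1)$ to $1-\int p_0p_1/(\alpha p_0+(1-\alpha)p_1)\,d\mathbf{s}$, and the Jensen/H\"older arguments for (A) are sound. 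Your route also buys precision on a point the paper glosses over: with a \emph{fixed} generator $f$ (which depends on $\alpha$), property (B) as literally stated holds only when $\alpha=1/2$; for $\alpha\neq 1/2$ the correct statement is the relabeling symmetry you prove, namely $D_{f_\alpha}(p_0\|p_1)=D_{f_{1-\alpha}}(p_1\|p_0)$ where $f_\alpha$ denotes the generator with parameter $\alpha$. The paper implicitly needs some form of this to pass from the $D_f(p_1\|p_0)$ delivered by~\cite{berisha2014empirical} to the $D_f(p_0\|p_1)$ in the theorem statement; in the paper's actual application the prediction and target batches have equal size, so $\alpha=1/2$ and the distinction vanishes, but your version is the one that is true as stated. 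What the paper's route buys in exchange is brevity and exact provenance: the NN-graph/MST equivalence and the divergence identification are quoted in precisely the form proved in the cited works, rather than reconstructed from a local mismatch heuristic.
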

\begin{proof}
    Given probability density functions $p_0$ and $p_1$, let $T_n'$  denote the number of cut-edges in an~\textit{minimum spanning tree}   constructed over $n=n_0 + n_1$ points, where $n_0$ and  $n_1$ points are sampled from  $p_0$  and $n_1$ respectively.  Theorem 4.1 in~\cite{henze1988multivariate} and Theorem 1 in~\cite{henze1999multivariate} establish that $\lim_{n\to\infty}\left(\frac{T_n}{n}\right)=\lim_{n\to\infty}\left(\frac{T_n'}{n}\right)$, indicating the asymptotic ratios of cut edges in a nearest-neighbor graph and an minimum spanning tree  are equivalent.  Furthermore, Theorem 1 in~\cite{berisha2014empirical} shows that $\lim_{n\to\infty}\frac{T_n'}{n}=2\alpha\left(1-\alpha\right)\left(1 - D_f\left(p_1 \| p_0\right)\right)$ where $\alpha=\lim_{n\to\infty}\frac{n_0}{n}$. Combining these results yields $\lim_{n\to\infty}\left(\frac{T_n}{n}\right)=2\alpha\left(1-\alpha\right)\left(1 - D_f\left(p_1 \| p_0\right)\right)$. The properties of $D_f\left(p_1 \| p_0\right)$,  where  $0\leq D_f\left(p_1 \| p_0\right)\leq 1$ and $D_f\left(p_1 \| p_0\right) = D_f\left(p_0 \| p_1\right)$, are  presented in Section A in~\cite{berisha2015empirically}. 
\end{proof}
 Theorem~\ref{Graphbasedestimation} implies that, the cut-edge ratio $\frac{T_n}{n}$ of the nearest neighbor graph constructed over the samples from $p_0$ and $p_1$, converges to a quantity that can be used to recover a $f$-divergence. Moreover, the recovered $f$-divergence is bounded between 0 and 1. As readers will see in~\eqref{eq_regularizer} in Section~\ref{Methodology}, this bounded range $D_f\left(p_0 \| p_1\right) \in [0,1]$ constrains the maximum divergence allowed to be imposed between predictions and target for the neural network regularization. This greatly reduces the searching space of the regularization strength.

\section{Methodology}
\label{Methodology}
In this section, we propose a novel regularization based on $f$-divergence to prevent the training of a neural network from overfitting in multi-target regression tasks.
\subsection{Motivation}
Our data model in~\eqref{datamodel} assumes that the multi-targets $\mathbf{y}$, representing oxide weights in this work, include noise $\mathbf{\epsilon}$ arising from random factors, such as measurement errors and the highly variable conditions of the extraterrestrial environment. Training a neural network by minimizing a loss function such as mean squared error (MSE) without regularization could result in the network overfitting to the noisy multi-targets $\mathbf{y}$ in~\eqref{datamodel}. Conventional regularization methods, such as  dropout~\cite{wager2013dropout}, do not regularize the network by explicitly maintaining the distance between network predictions and noisy targets, which still poses a risk of the network overfitting to noise contained in the targets. In the following sections, we will propose an output regularization method that \textit{explicitly} maintains the divergence between the distributions of the network predictions and noisy targets during network training, thereby \textit{preventing the network from overfitting to the distribution of the noisy targets}.

\subsection{$f$-Divergence Regularization}
We write $\hat{\mathbf{y}}=\hat{g}(\mathbf{x}; \Theta)\in\mathbb{R}^{d_2}$ to denote the multi-oxide weights predictions from a neural network parameterized by $\Theta$ given a spectrum $\mathbf{x}$. Then, considering $\left(\mathbf{X}, \mathbf{Y}\right)\sim p_{\mathbf{X}\mathbf{Y}}(\mathbf{x}, \mathbf{y})$ in~\eqref{datamodel}, we write $\hat{\mathbf{Y}}=\hat{g}\left(\mathbf{X};\Theta\right)$ to denote a prediction \textit{r.v.} and recall that $\mathbf{Y}=g(\mathbf{X}) + \mathbf{\epsilon}$ in~\eqref{datamodel} represents the noisy target \textit{r.v.} As a result, we have $\hat{\mathbf{Y}}\sim p_{\hat{Y}}\left(\mathbf{\hat{y}}\right)=\int \mathbb{I}\left(\mathbf{\hat{y}}=\hat{g}\left(\mathbf{x},\Theta\right)\right)p_\mathbf{X}\left(\mathbf{x}\right)d\mathbf{x}$ and $\mathbf{Y}\sim p_{\mathbf{Y}}\left(\mathbf{y}\right)=\int p_{\mathbf{Y}\mid \mathbf{X}}\left(\mathbf{y}\mid\mathbf{x}\right)p_\mathbf{X}\left(\mathbf{x}\right)d\mathbf{x}$, where $\mathbb{I}\left(\mathbf{\hat{y}}=\hat{g}\left(\mathbf{x},\Theta\right)\right)$ is an indicator function. The presence of this indicator function in $p_{\hat{\mathbf{Y}}}\left(\mathbf{\hat{y}}\right)$ implies that  $p_{\hat{\mathbf{Y}}\mid\mathbf{X}}\left(\mathbf{\hat{y}}\mid\mathbf{x}\right)=\mathbb{I}\left(\mathbf{\hat{y}}=\hat{g}\left(\mathbf{x},\Theta\right)\right)$, meaning that $p_{\hat{\mathbf{Y}}\mid\mathbf{X}}\left(\mathbf{\hat{y}}\mid\mathbf{x}\right)$ is only non-zero at $\mathbf{\hat{y}}=\hat{g}\left(\mathbf{x},\Theta\right)$. Given that the target \textit{r.v.} $\mathbf{Y}$ contains measurement noise $\mathbf{\epsilon}$, the $f$-divergence $D_f\left(p_{\hat{\mathbf{Y}}}\| p_{\mathbf{Y}};\Theta\right)$, which is parameterized by $\Theta$ in the network  $\hat{g}\left(\mathbf{X};\Theta\right)$, can be minimized to explicitly impose a divergence between $p_{\hat{\mathbf{Y}}}$ and $p_{\mathbf{Y}}$ ,  thereby preventing the network predictions from overfitting to the distribution of the noisy targets. To this end, we introduce a loss function that includes \textit{$f$-divergence regularization} for the multi-target regression: 
\begin{align} 
\mathcal{L}\left(\Theta\right)&=\underbrace{\frac{\sum_{i=1}^{b}\sum_{j=1}^{d_2}\left(\hat{g}\left(\mathbf{x}_i;\Theta\right)_j - y_{ij}\right)^2}{bd_2}}_{\clubsuit}\nonumber\\
&+\underbrace{w \left( \hat{D}_f\left(p_{\hat{\mathbf{Y}}} \| p_{\mathbf{Y}};\Theta\right) -\gamma\right)^2}_{\spadesuit}, 
\label{eq_regularizer}
\end{align} 
where $\{\left(\mathbf{x}, \mathbf{y}\right)_i\}_{i=1}^{b},\forall (\mathbf{x},\mathbf{y})\in \mathbb{R}^{d_1}\times\mathbb{R}^{d_2}$ represents a training set of spectra and oxide weights. Here, $\clubsuit$ represents the mean squared error (MSE), and $\spadesuit$ represents the $f$-divergence regularizer. Additionally, $\hat{D}_f\left(p_{\hat{\mathbf{Y}}} \| p_{\mathbf{Y}};\Theta\right)$ (see~\eqref{eq_dp} for the $f$-function) is an empirical $f$-divergence between $p_{\mathbf{Y}}$  and $p_{\hat{\mathbf{Y}}}$ evaluated using the set of targets and predictions $\{\left(\mathbf{y}, \mathbf{\hat{y}}\right)_i\}_{i=1}^{b}$. We will discuss the calculation of $\hat{D}_f\left(p_{\hat{\mathbf{Y}}} \| p_{\mathbf{Y}};\Theta\right)$ in Section~\ref{SecImplementation}. The parameter $\gamma$ represents the enforced divergence between $p_{\hat{\mathbf{Y}}}$ and $p_{\mathbf{Y}}$ . As $\hat{D}_f$ is an approximation of $D_f\in[0,1]$ as established in Theorem~\ref{Graphbasedestimation}, $\gamma$ is also bounded between $0$ and $1$, reducing  search space for the appropriate divergence. Additionally, $w$ is the regularization strength that balances the MSE and the $f$-regularization loss. Ideally, the selection of the divergence hyperparameter $\gamma$ is proportional to the noise $\mathbf{\epsilon}$ contained in the targets $\mathbf{y}$. In practice, we can determine the hyperparameters $w$ and $\gamma$ through validation.
\subsection{$f$-Divergence Acts as an Auxiliary Loss}
As shown in~\eqref{eq_regularizer}, the $f$-divergence regularization term (denoted by $\spadesuit$) enforces larger divergences between predictions and targets when $\hat{D}_f\left(\hat{p}_{\mathbf{Y}}\|p_\mathbf{Y};\Theta\right) < \gamma$, thereby preventing the neural network's predictions from overfitting to noisy targets. Conversely, when $\hat{D}_f\left(\hat{p}_{\mathbf{Y}}\|p_\mathbf{Y};\Theta\right) > \gamma$, the $f$-divergence regularization acts as an auxiliary loss term, penalizing the network for producing predictions that are excessively different in distribution  from the targets. A pictorial description of the benefit of the $f$-divergence regularization acting as an auxiliary loss is provided in Figure~\ref{FigSelectfromCurves}. 
\begin{figure}[h!]
    \centering
\includegraphics[width=0.4\textwidth]{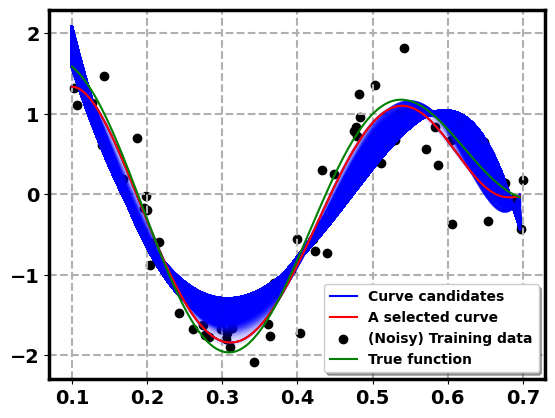}
\caption{Candidates for $L_2$ regularized curves are highlighted in blue. These curves avoid overfitting to the noisy data. By accounting for the presence of $f$-divergence between noisy data and predictions made by the target function, candidate curves with divergence exceeding a specified threshold ($\gamma$ in~\eqref{eq_regularizer}) are eliminated. This  yields a final selected curve that maintains an appropriate  $f$-divergence and closely approximates the target function.}
\label{FigSelectfromCurves}
\end{figure}
As observed,  using $L_2$ regularization independently results in many candidate curves that avoid overfitting the noisy data. However, a significant portion of these candidate curves exhibit divergences between the predictions and the noisy data larger than an appropriate threshold (i.e., $\gamma$ in~\eqref{eq_regularizer}). Combining $L_2$ regularization with the $f$-divergence regularization term can further refine the candidate curves, narrowing them down to the one that is closer to the target curve compared to independently using $L_2$, provided a suitable divergence $\gamma$ is imposed. In practice, the appropriate  $\gamma$ is selected through validation evaluation, choosing the value  with the smallest validation error. We will demonstrate the  benefits of incorporating $f$-divergence regularization into $L_2$ regularization through experiments in Section~\ref{ExpMixtures}.
\subsection{Differentiable Estimation of the $f$-Divergence for Network Training}
\label{SecImplementation}
\begin{figure}[h!]
    \centering
\subfloat[Large divergence]{\includegraphics[width=0.23\textwidth]{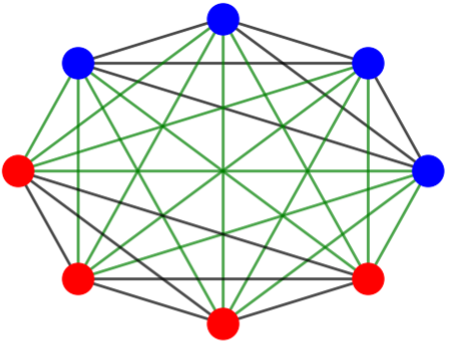}
\label{TrueFunction}}\hfill
\subfloat[Small divergence]{\includegraphics[width=0.23\textwidth]{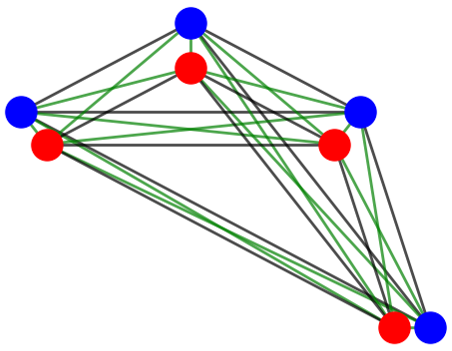}
\label{ProperDivergence}
}
\caption{(a) and (b) illustrate two scenarios of predictions (red nodes) and targets (blue nodes) in a fully connected graph, where edge weights are \textit{inversely proportional} to the distance between nodes. The green edges represent the edges between prediction and target nodes. In (a), the red nodes are farther from the blue nodes, resulting in a smaller sum of green edge weights, indicating a \textit{larger} $f$-divergence. In contrast, in (b), the red and blue nodes are closer, leading to a larger sum of green edge weights, which indicates a \textit{smaller} $f$-divergence.}
\label{FigFullyGraph}
\end{figure}
\begin{figure*}[h!]
\begin{tcolorbox}
We write $\mathcal{S}_0=\{\mathbf{y}_i\}_{i=1}^{b}$ and $\mathcal{S}_1=\{\mathbf{\hat{y}}_i\}_{i=1}^{b}$ to denote two sets of sample  realizations of \textit{r.v.s} \textit{i.i.d.} generated from $p_{\mathbf{Y}}$ and $p_{\mathbf{\hat{Y}}}$, respectively. Let $G=(V, E)$ represent a \textit{fully connected} Euclidean graph constructed over $V=\mathcal{S}_0\bigcup\mathcal{S}_1$. Additionally, let $n=|V| = |S_0| + |S_1|$ and use $\pi(v)\in \{0,1\},\forall v\in V$ to indicate the membership of $v$--whether it belongs to $\mathcal{S}_0$ or $\mathcal{S}_1$. Furthermore, $\forall v_i\in V$, we  define $\mathbf{w}_i=\left(w_{(i,1)},\cdots,w_{(i, i-1)},w_{(i, i+1)},\cdots, w_{(i,n)}\right)$ as the weights of all edges in $E$ connected to $v_i$. We also define $\sigma\left(\mathbf{w}_i\right)_j=\frac{e^{-w_{(i,j)}/\lambda}}{\sum_{u=1,u\neq i}^n e^{-w_{(i, u)}/\lambda}}$ to indicate a normalized $w_{i, j}$  after applying the softmax function $\sigma\left(\mathbf{w}_i\right)$, where $\lambda$ is a scaling parameter. Then,
\begin{align}
    \hat{t}_n= \sum_{v_i\in V}\sum_{v_j\in V}\mathbbm{1}\left(\pi\left(v_i\right)\neq \pi\left(v_j\right)\right)\sigma\left(\mathbf{w}_i\right)_j 
\label{eq_cutedge_approxition}
\end{align}
is an differentiable approximation of $t_n$.
\end{tcolorbox}
\centering
\caption{Differentiable approximation of the cut-edge ratio $t_n$~\cite{djolonga2017learning}.}
\label{MethodDifferentiable_divergence}
\end{figure*}
Theorem~\ref{Graphbasedestimation} implies that, given the targets $\{\mathbf{y}_i\}_{i=1}^{b}$ and the corresponding network predictions $\{\mathbf{\hat{y}}_i\}_{i=1}^{b}$, one can construct a nearest neighbor graph over $\{\mathbf{y}_i\}_{i=1}^{b}\bigcup \{\mathbf{\hat{y}}_i\}_{i=1}^{b}$, calculate the cut-edge number $t_n$ (a realization of $T_n$ and $n=2b$) in~\eqref{eq_dp} and recover the empirical estimate $\hat{D}_f\left(p_{\mathbf{\hat{Y}}}\| p_{\mathbf{Y}}\right)$. However, the cut-edge number $t_n$ of the nearest neighbor graph is \textit{non-differentiable} with respect to $\{\mathbf{y}_i\}_{i=1}^{b}\bigcup \{\mathbf{\hat{y}}_i\}_{i=1}^{b}$, making neural network training infeasible. The authors of~\cite{djolonga2017learning} propose a \textit{differentiable} approximation $\hat{t}_n$ of $t_n$, resulting in an differentiable estimation of $D_f\left(p_0\| p_1\right)$. Such an approximation $\hat{t}_n$ is a ``smoothed'' cut-edge number over a fully connected graph. 

Figure~\ref{FigFullyGraph} illustrates this approach, where the weights of cut-edges in a fully connected graph constructed from $\{\mathbf{y}_i\}_{i=1}^{b}$ and $\{\mathbf{\hat{y}}_i\}_{i=1}^{b}$ are summed to approximate $T_n$ as $\hat{T}_n$ in~\eqref{eq_dp}. The edge weights are \textit{inversely proportional} to the distances (e.g., Euclidean distance) between nodes, allowing $\hat{T}_n$ to capture the divergence between $\{\mathbf{y}_i\}_{i=1}^{b}$ and $\{\mathbf{\hat{y}}_i\}_{i=1}^{b}$. Specifically, a large sum $\hat{T}_n$ of cut-edge weights corresponds to  a small $f$-divergence, and vice versa. Furthermore, since the changes of predictions only affect the edge weights rather than the graph structure, $\hat{T}_n$ remains differentiable with respect to the predictions, and consequently, the parameters of a neural network. This is in stark contrast with Figure~\ref{FigNNGraph} using the cut-edge number $T_n$ in an nearest neighbor graph, in which changes in the predictions alter the graph structure, making $T_n$ in Theorem~\ref{Graphbasedestimation} non-differentiable with respect to the predictions. 

We formally present the differentiable approximation $\hat{t}_n$ in Figure~\ref{MethodDifferentiable_divergence}. \eqref{eq_cutedge_approxition} sums  the weights normalized by a softmax function over a fully-connected graph to approximate $t_n$ in~\eqref{eq_dp}, the cut-edge number in the nearest neighbor graph. 
The resulting approximation $\hat{t}_n$ is differentiable with respect to  $\{\mathbf{y}_i\}_{i=1}^{b}\bigcup \{\mathbf{\hat{y}}_i\}_{i=1}^{b}$. The $f$-divergence regularizer in~\eqref{eq_regularizer} always estimates $D_f\left(p_{\mathbf{\hat{Y}}}\| p_{\mathbf{Y}} \right)$ using network predictions and targets of the same size $b$, hence the sample ratio $\alpha$ in Theorem~\ref{Graphbasedestimation} is $0.5$. Therefore, combining~\eqref{eq_cutedge_approxition} and~\eqref{eq_dp}, a \textit{differentiable estimation of $D_f\left(p_{\mathbf{\hat{Y}}}\| p_{\mathbf{Y}} \right)$ for the $f$-divergence regularizer} is given by 
\begin{align}
   \hat{D}_f\left(p_{\mathbf{\hat{Y}}}\| p_{\mathbf{Y}} \right) = 1 - \frac{2\hat{t}_n}{n}
   \label{EqFdifferentiable}
\end{align}

\subsection{Algorithm}
In this section, we present the algorithmic implementation of the $f$-divergence regularization for neural network training in Algorithm~\ref{FdivergenceAlgo}. The algorithm takes the training and validation sets $\mathcal{D}_{\text{tr}}$ and $\mathcal{D}_{\text{val}}$, epoch number $U$, batch size $b$ and learning rate $r$ as inputs. Additionally, the input $\lambda$ represents a scale parameter that contributes to a distance metric used to construct a weighted fully-connected graph, enabling the $f$-divergence to be differentiable (See~\eqref{eq_cutedge_approxition}). The inputs  $w$ and $\gamma$ indicate the balancing factor and the imposed $f$-divergence for the regularization term  in~\eqref{eq_regularizer}. 
\begin{algorithm}[h!]
\caption{The neural network training with $f$-divergence regularization}\label{FdivergenceAlgo}
\begin{algorithmic}[1]
\STATE \textbf{Inputs: } $\mathcal{D}_{\text{tr}},\mathcal{D}_{\text{val}}, E, b, r, \lambda, w,\gamma$
\STATE \textbf{Outputs: $\hat{g}\left(\mathbf{x}; \Theta^*\right)$} 
\STATE $u\gets 0, \epsilon^* \gets \infty$ and initialize a neural network $\hat{g}\left(\mathbf{x};\Theta\right)$
\WHILE{$u<U$}
\STATE $B\gets \frac{|\mathcal{D}_{\text{tr}}|}{b},v\gets 0$
\STATE Split the training set $\mathcal{D}_{\text{tr}}$ to $B$ batches randomly 
\WHILE{$v<B$}
\STATE Calculate $\mathcal{L}\left(\Theta\right)$ with~\eqref{eq_regularizer} and~\eqref{eq_cutedge_approxition} given $\lambda$, $w$ and $\gamma$, using the $v$-th batch of $\mathcal{D}_{\text{tr}}$ 
\STATE  Update $\Theta$  with any gradient descent algorithm using the learning rate $r$
\STATE $v\gets v+1$
\ENDWHILE
\STATE Calculate the MSE $\epsilon$ with $\hat{g}\left(\mathbf{x};\Theta\right)$ using the validation set $\mathcal{D}_{\text{val}}$
\STATE \textbf{if }$\epsilon<\epsilon^*$\textbf{ then }$\Theta^*\gets \Theta$ 
\STATE $u\gets u+1$
\ENDWHILE 
\textbf{Return } $\hat{g}\left(\mathbf{x};\Theta^*\right)$ 
\end{algorithmic}
\end{algorithm}
As per the standard routine of training a neural network, we split the training set to $B$ mini-batches with a batch size of $b$ (approximately $b$, depending on the divisibility of $\left|\mathcal{D}_{\text{tr}}\right|$). A mini-batch gradient algorithm is applied to update the parameters $\Theta$ of the neural network function $\hat{g}\left(\mathbf{x};\Theta\right)$. Specifically, for each mini-batch, the predictions $\{\mathbf{\hat{y}}_{i}\}_{i=1}^b$ and corresponding targets $\{\mathbf{y}_{i}\}_{i=1}^b$ are computed,  and the $f$-divergence regularization term is evaluated using~\eqref{eq_regularizer}.
MSE of $\hat{g}\left(\mathbf{x};\Theta\right)$ is calculated on the validation set $\mathcal{D}_\text{val}$ at the end of each epoch during training. The algorithm returns $\hat{g}\left(\mathbf{x};\Theta^*\right)$ that generates the lowest validation error $\epsilon^*$. 
\section{Numerical simulations}
In this section, we present numerical simulations illustrating that minimizing an $f$-divergence-based empirical loss is more likely to find the optimal regression function compared to minimizing the mean squared error (MSE). These results numerically highlight the value of incorporating $f$-divergence into the loss function when training neural networks.
Specially, we construct a quadratic target function to generate data as follows,
\begin{align}
    Y = ax^2 + bx + E,\quad E\sim\mathcal{N}\left(0,\sigma^2\right)
\end{align}
in which $a$ and $b$ are target parameters of the quadratic function and $E$ is a Gaussian noise variable. We set $a=0.4$, $b=0.4$, and $\sigma$ as 2. Additionally, we uniformly sample $30$ points  from $\left[-2, 2\right]$ as predictors $x$, yielding a dataset $\{\left(x,y\right)_{i}\}_{i=1}^{30}$  of size $30$. Furthermore, we define a $5\times 5$ grid of candidate values for $a$ and $b$, ranging from $0.2$ to $0.6$ with intervals of $0.1$. The discrete space makes the ``grid search'' optimization feasible, where empirical risks are evaluated for each $\left(a, b\right)$ pair, and the pair with the smallest empirical risk is selected.  
\begin{figure}[h!]
    \centering
\subfloat[MSE loss]{\includegraphics[width=0.23\textwidth]{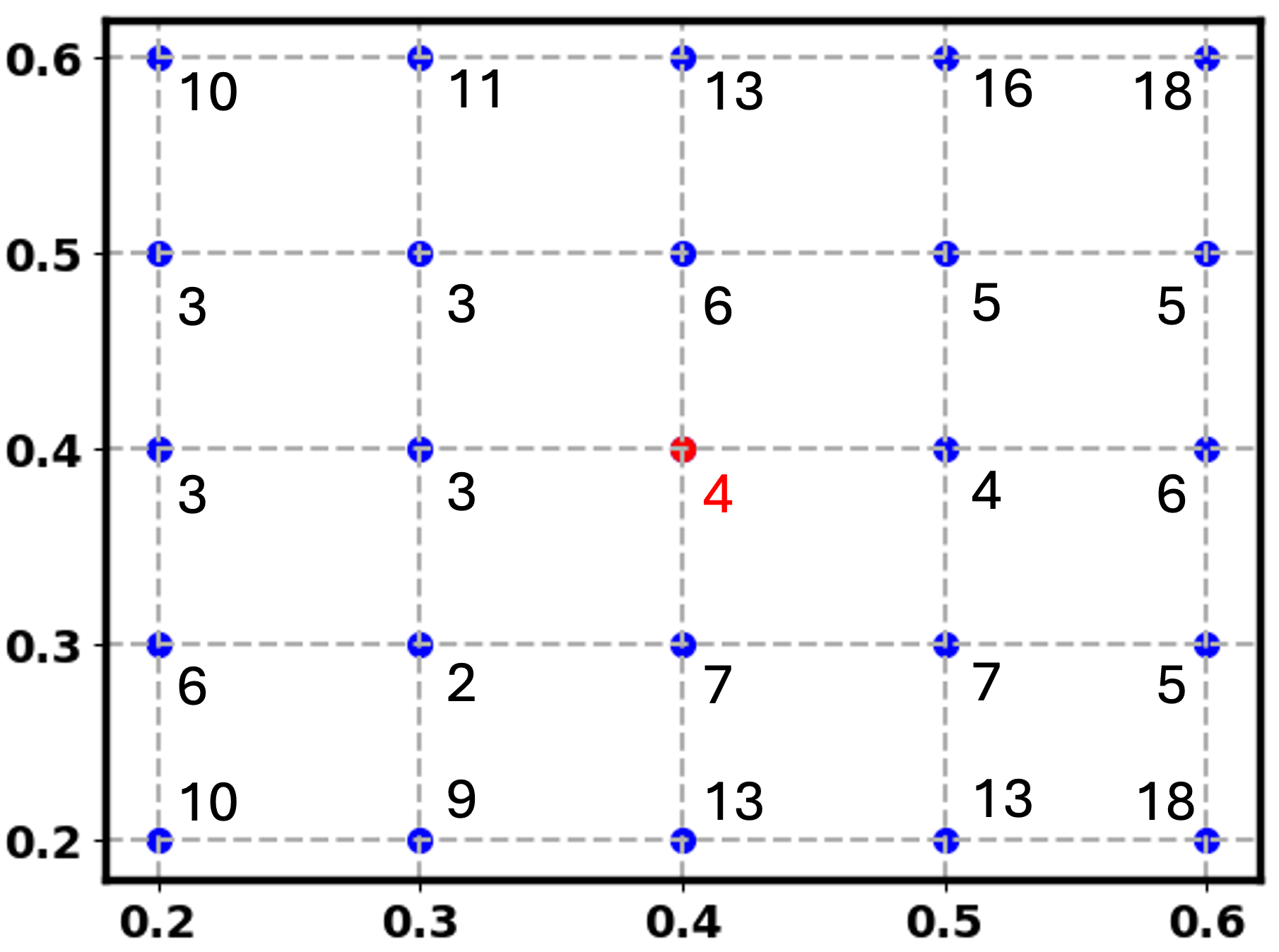}
\label{GridSpace1}}\hfill
\subfloat[$f$-divergence loss]{\includegraphics[width=0.23\textwidth]{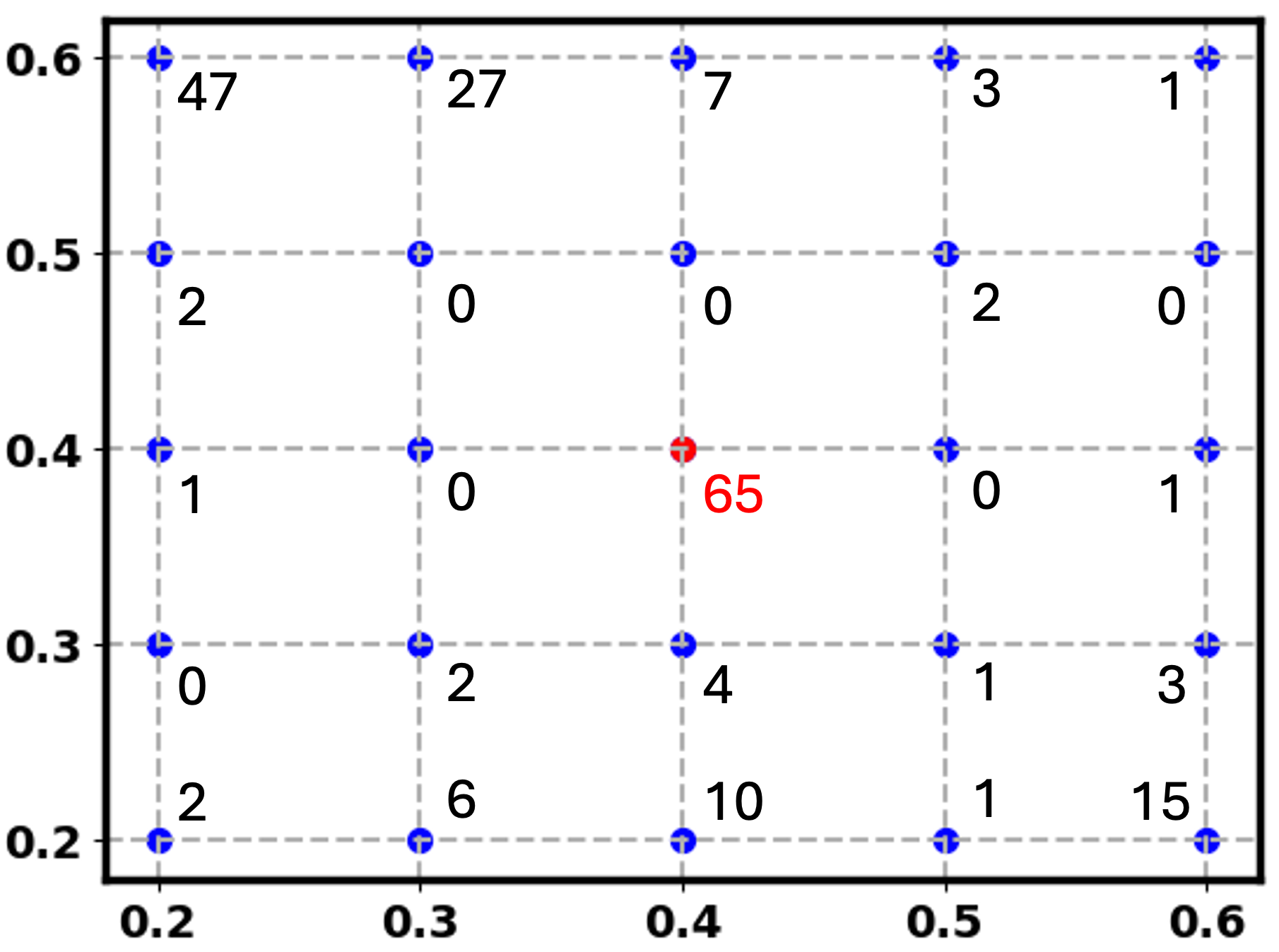}
\label{GridSpace2}
}
\caption{The frequencies of optimized parameters for minimizing the MSE and $f$-divergence loss respectively are presented in (a) and (b), overlaying a $5\times 5$ grid space of candidate parameters. The target parameter is highlighted in red. The frequency numbers are placed next to the corresponding optimized parameters. The sum of the frequencies equals to $200$ which is the number of simulation runs.}
\label{GridSpace}
\end{figure}

The task is to identify parameters $\hat{a}$ and $\hat{b}$ from the grid space to construct a quadratic function $\hat{a}x^2 +\hat{b}x$ to approximate the target $ax^2 +bx$.  We minimize two empirical risks: MSE and the $f$-divergence-based loss (only $\spadesuit$ in~\eqref{eq_regularizer}) with  $\{\left(x,y\right)_{i}\}_{i=1}^{30}$. In particular, since the ``grid search'' optimization does not require computing gradients, the non-differentiable $f$-divergence can be used to construct the $f$-divergence-based loss.
We set $\gamma=0.5$ for the $f$-divergence loss. 

The numerical simulation is repeated $200$ times, with new $\left\{(x,y)_{i}\right\}_{i=1}^n$ sampled each time,  to  obtain the distribution of minimizers $\hat{a}$ and $\hat{b}$ for both empirical risks, as shown in Figure~\ref{GridSpace}. Figure~\ref{GridSpace} illustrates the frequency of optimized parameters overlaying the $5\times 5$ grid space of parameter candidates. Notably, minimizing the empirical $f$-divergence loss hits the target $a=0.4$ and $b=0.4$ (highlighted in red)  $65$ times, significantly more often than minimizing the MSE loss, which achieves the target only $4$ times.
\\
\textbf{Why not minimize the $f$-divergence ($\spadesuit$ in~\eqref{eq_regularizer}) alone in practice?} In our numerical simulation, selecting an appropriate $f$-divergence $\gamma=0.5$ and minimizing the $f$-divergence-based empirical risk alone has a higher chance of identifying target parameters than minimizing the MSE. However, when the parameter searching space is big, such as in neural networks, \textit{it becomes necessary to construct a loss function that combines the MSE and the $f$-divergence loss, as shown in~\eqref{eq_regularizer}}. This necessity arises because evaluating the $f$-divergence between predictions and targets ignores the pairing knowledge between predictors and predictions, $\left(\mathbf{X}_i, \hat{Y}_i\right)_{i=1}^n$. As a result, it is possible to find a solution in the parameter space that generates predictions   with an appropriate $f$-divergence from the responses, even if the prediction $\hat{Y}_i$ is  far from a true target $g\left(\mathbf{X}_i\right),\forall i\in[1, n]$. An illustrative example is provided in Figure~\ref{FigPairLose}. In this case, although the $f$-divergence between the predictions and targets in Figure~\ref{FigPairLose} (b) seems appropriate, the approximation and true functions are distant. This discrepancy occurs because the evaluation of the $f$-divergence inherently loses the pairing knowledge between the predictions (or targets) and the predictors.
 \begin{figure}[h!]
    \centering
\includegraphics[width=0.4\textwidth]{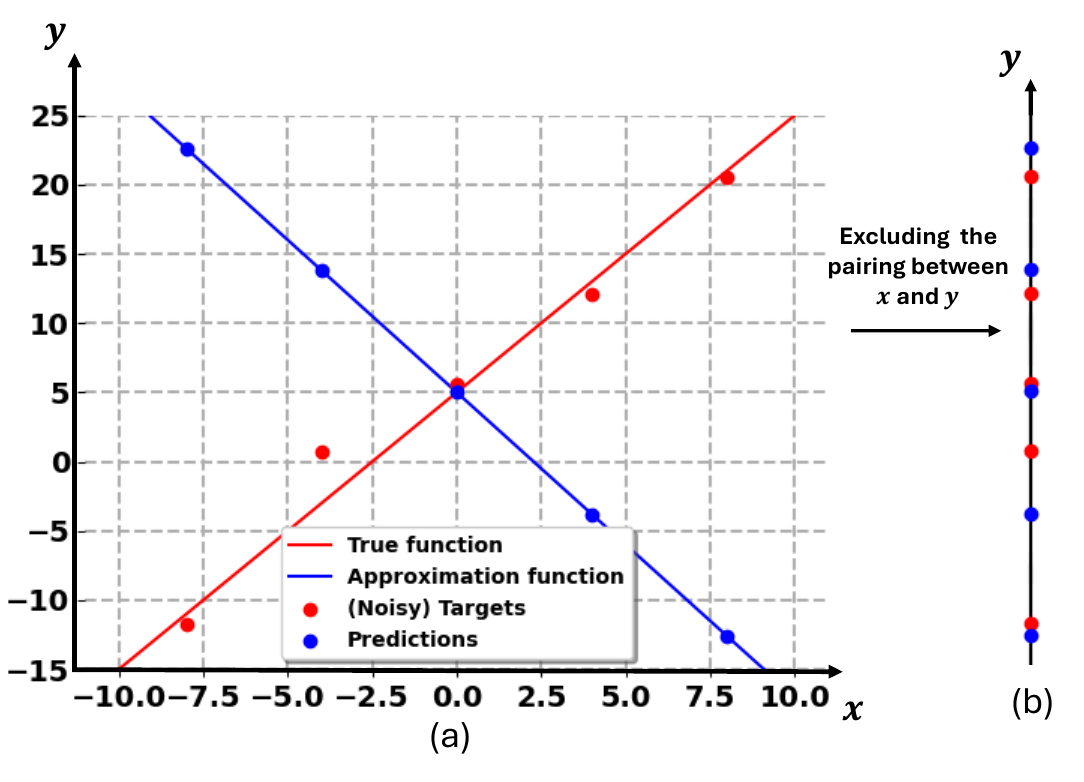}
\vspace{-0.4cm}
\caption{The $f$-divergence between predictions and targets in (b) appears appropriate even the approximation and true functions are distant, due to the loss of pairing knowledge between predictions (or targets) and predictors in (a).}
\label{FigPairLose}
\vspace{-0.4cm}
\end{figure}

\section{Experimental Results for Multi-Oxide Spectroscopic Analysis}
In this section, we present experimental results for the tasks of predicting multi-oxide weights with  Laser-Induced Breakdown Spectroscopic (LIBS) data.
\subsection{Description of the LIBS Data}
NASA has successfully landed two rovers carrying LIBS instruments,~\textit{Curiosity} in 2012~\cite{welch2013systems},  and~\textit{Perseverance} in 2020~\cite{maki2020mars}. The LIBS instruments ChemCam~\cite{maurice2012chemcam} on Curiosity and SuperCam on Perseverance~\cite{maurice2021supercam}, are central standoff (non-contact) analytical techniques of each rover's instrument suite.
SuperCam is an enhanced version of ChemCam that not only collects LIBS data over a broader spectral range but also gathers data from additional modalities, including audio.
 Equipped with the LIBS technique, both instruments emit a laser that is focused on a target several meters away. The laser ablates a small portion of the targeted sample, creating a plasma. As the plasma cools, the plasma species emit light characteristic of their electronic transitions, ionization state, and elemental composition. The resultant spectrum can be analyzed to determine elemental composition and elemental abundance for a wide variety of species.  The Los Alamos National Laboratory (LANL)  owns testbed versions of both ChemCam and SuperCam to measure the LIBS spectra for various rock samples in the simulated extraterrestrial environment. This simulated environment closely resembles real extraterrestrial conditions, such as those on Mars, for the data utilized in this work.
\begin{figure}[h!]
    \centering
\subfloat[ChemCam]{\includegraphics[width=0.23\textwidth]{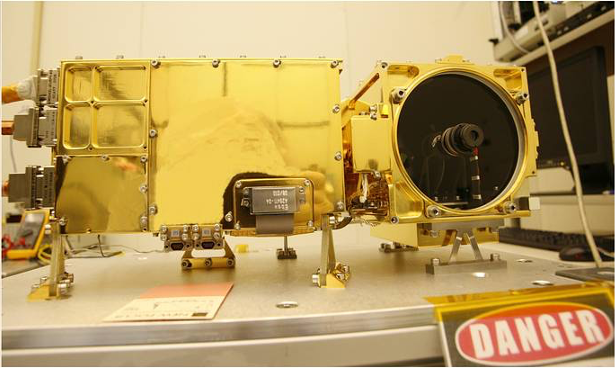}
}
\hfill
\subfloat[SuperCam]{\includegraphics[width=0.23\textwidth]{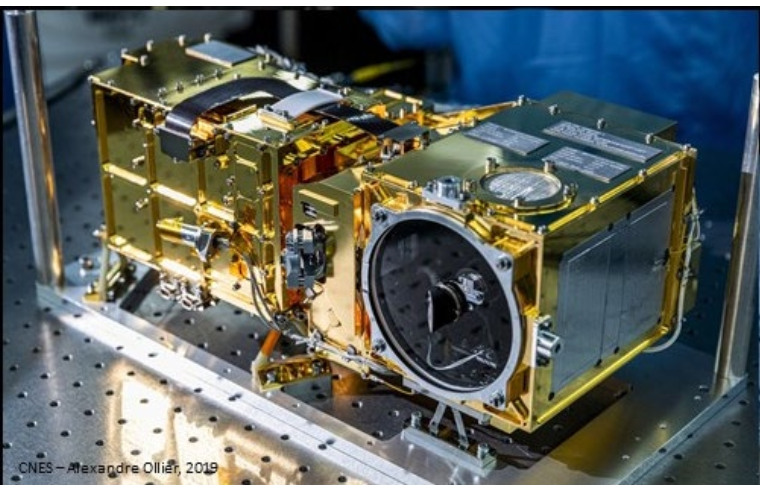}
}\\
\subfloat[A simulating chamber]{\includegraphics[width=0.23\textwidth]{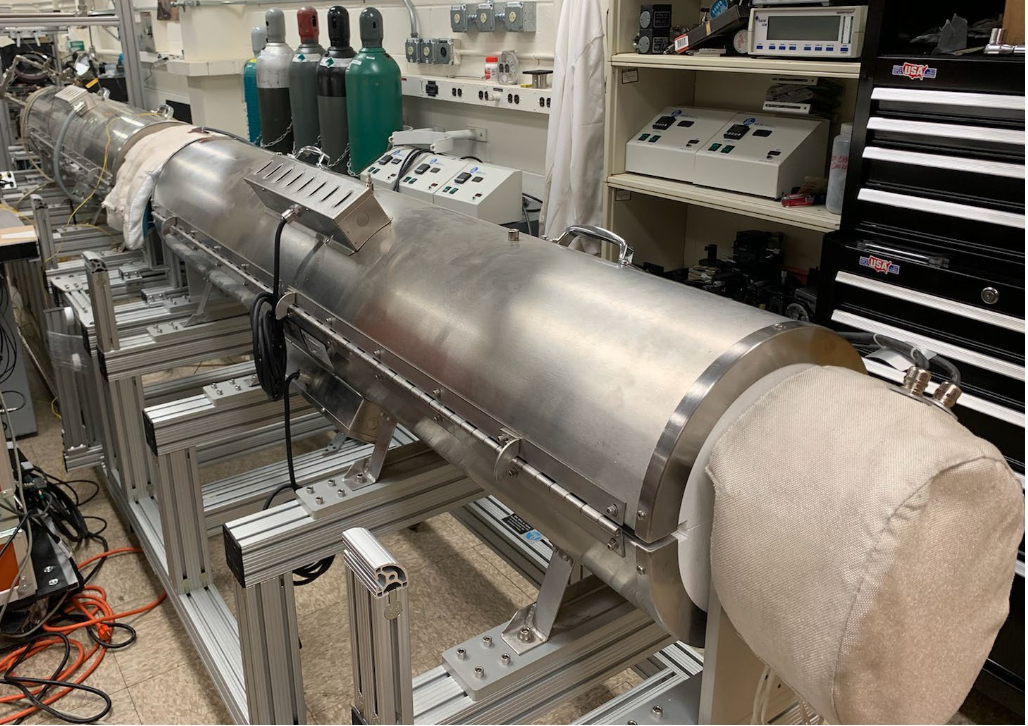}
}
\hfill
\subfloat[LIBS spectrum]{\includegraphics[width=0.23\textwidth]{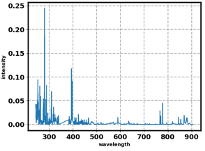}
}
\caption{The testbed ChemCam (a) and SuperCam (b), a chamber (c) for simulating an extraterrestrial environment environment and an example (d) of LIBS spectrum collected under the simulated Martian environment.}
\label{ChemCamDevice}
\end{figure}Figure~\ref{ChemCamDevice} (a), (b) and (c) describe the testbed ChemCam and SuperCam, and the chamber for simulating the extraterrestrial environment. Additionally, Figure~\ref{ChemCamDevice} (d) provides an example of the LIBS spectrum measured from a rock sample in a simulated Martian environment. Consequently, scientists at LANL have used the testbed  ChemCam and SuperCam to collect~\textit{two standard datasets} for the task of spectroscopic analysis using LIBS data. The \textbf{ChemCam} dataset is comprised of the LIBS measurements for  $\sim$550 rock samples in the simulated Martian environment, along with nine oxide-weights of these samples.  The \textbf{SuperCam} dataset is comprised of the LIBS measurements for $\sim$300 rock samples in the simulated Martian environment, along with eight oxide-weights of these samples. These data are publicly available in the planetary data system (PDS)~\cite{mcmahon1996overview}. It is anticipated that, by using the datasets prepared in advance, one can develop an automation tool based on ML trained with these data to perform the spectroscopic analysis for the multi-oxide weight prediction for the rocks collected on Mars. 
\subsection{The task, network architecture, training details and evaluation methods}
\textbf{The task:} The primary task in this work is to predict weights of multiple oxide within rock samples using LIBS spectroscopic data. With significant advancements in computational power and the proliferation of industrial data, CNNs have achieved widespread success in automating complex tasks such as autonomous driving and fraud detection in finance. However, the datasets for ChemCam and SuperCam are relatively small, typically containing only hundreds of samples—several orders of magnitude smaller than industrial datasets. Consequently, applying CNNs to the task of predicting multi-oxide weights presents a higher risk of overfitting compared to industrial applications.  To address this challenge, we propose training a 1D-CNN with an innovative $f$-divergence regularization, as detailed in Algorithm~\ref{FdivergenceAlgo}, to achieve a CNN solution with smaller prediction error for the multi-response prediction.
\begin{table*}
\centering
\caption{Details of the hyper-parameter sets for the various regularization methods} 
\resizebox{1\textwidth}{!}{
\begin{tabular}{c|cccccccccccc} \hline
 $L_1$ strengths &0.00005&0.0001&0.0003&0.0005&0.0007&0.001\\\hline
$L_2$ strengths &0.00005&0.0001&0.0003&0.0005&0.0007&0.001\\\hline
Dropout probabilities &0.03&0.04&0.05&0.06&0.07&0.08&0.1\\\hline
$\left(w, \gamma\right)$ for the $f$-divergence  &(0.00005, 0.001)&(0.00007, 0.0012)&(0.00009, 0.0014) &(0.0001, 0.02)& (0.00012, 0.005)&(0.00014, 0.01)&(0.00016, 0.015)&(0.0003, 0.015)&(0.0005, 0.02)\\
regularization~\eqref{eq_regularizer}&(0.0007, 0.025)&(0.005, 0.03)&(0.01, 0.02)
\\\hline
\end{tabular}
}
\label{TableHyperParameters}
\end{table*}

\textbf{The network architecture:} We employ a 1D-CNN with three convolutional blocks, each comprising a convolutional layer, batch normalization, ReLU activation, and max-pooling for dimensionality reduction. The network begins with a single input channel and progressively reduces the number of output channels from 32 to 16 and finally to 8. These convolutional blocks are followed by a fully connected network that maps the outputs of the final block to multiple end nodes, representing the predicted multi-oxide weights.

\textbf{The training details and evaluation methods:}  
As will be  detailed in Section~\ref{secexpanalysis}, we evaluate the performance of the 1D-CNN trained under various conditions: without regularization, and with $L_1$, $L_2$, dropout and the proposed $f$-divergence regularization. To ensure a fair comparison, we maintain an identical training setup across all experiments. Specifically, the ChemCam/SuperCam datasets are split into training, validation, and test sets with an 80/10/10 ratio. We set the training parameters as follows: the number of training epochs $U = 500$, batch size $b = 16$, learning rate $r = 1$, and scale parameter $\lambda = 2$. The scale parameter is used solely for calculating the differentiable $f$-divergence in~\eqref{eq_cutedge_approxition}. Network training was performed using the Adadelta optimizer~\cite{zeiler2012adadelta}. Training with regularization requires selecting hyperparameters, such as regularization strengths, which are detailed in Table~\ref{TableHyperParameters}. The validation set is used for hyperparameter tuning, and performance is evaluated on the test set. Moreover, we repeat the training routines across 15 runs and report the average~\textit{root mean squared error (RMSE)} on the test set. Finally, we perform a t-test to assess whether the differences in RMSE between our method and the compared methods are statistically significant.
\begin{table}
\centering
\caption{RMSE comparison between $f$-divergence and various methods, including no regularization, $L_1$, $L_2$ and dropout methods. Smaller RMSE values are highlighted. Paired t-test results indicate statistical significance: \cmark\text{ } indicates $f$-divergence has significantly smaller RMSE, \textbf{-} indicates no significant difference, and \xmark\text{ } indicate the compared method has significantly smaller RMSE. A significant level $0.1$ is used for the t-test.}
\textbf{ChemCam results}\vspace{0.1cm}
\resizebox{0.49\textwidth}{!}{
\begin{tabular}{c|cccccccccc} \hline
 \backslashbox{Method}{Oxide}&All Oxides&SiO$_2$&TiO$_2$&Al$_2$O$_3$&FeO&MnO&MgO&CaO&Na$_2$O&K$_2$O\\\midrule
  No regularization &2.33&4.15&0.62&\textbf{1.84}&2.54&1.19&1.05&1.40&\textbf{0.56}&\textbf{0.64}\\
$f$-divergence (proposed) &\textbf{2.25}&\textbf{4.07}&\textbf{0.59}&1.89&\textbf{2.24}&\textbf{1.13}&\textbf{0.97}&\textbf{1.38}&\textbf{0.56}&0.65
\\\hline
$L_1$ &2.35&4.31&0.57&\textbf{1.83}&2.33&\textbf{1.04}&\textbf{0.93}&1.59&0.60&0.66\\
$f$-divergence (proposed) &\textbf{2.25}&\textbf{4.07}&\textbf{0.59}&1.89&\textbf{2.24}&1.13&0.97&\textbf{1.38}&\textbf{0.56}&\textbf{0.65}\\\hline
$L_2$&2.30&4.12&0.60&1.91&2.40&1.18&1.03&\textbf{1.33}&0.58&\textbf{0.63}\\
$f$-divergence (proposed) &\textbf{2.25}&\textbf{4.07}&\textbf{0.59}&\textbf{1.89}&\textbf{2.24}&\textbf{1.13}&\textbf{0.97}&1.38&\textbf{0.56}&0.65\\\hline
Dropout&2.33&4.20&\textbf{0.56}&1.97&2.36&\textbf{1.11}&1.00&1.41&0.59&0.73\\
$f$-divergence (proposed) &\textbf{2.25}&\textbf{4.07}&0.59&\textbf{1.89}&\textbf{2.24}&1.13&\textbf{0.97}&\textbf{1.38}&\textbf{0.56}&\textbf{0.65}
\\\hline
\end{tabular}
}\vspace{0.2cm}
\resizebox{0.49\textwidth}{!}{
\begin{tabular}{c|cccccccccc} \hline
 \backslashbox{Method}{Oxide}&All Oxides&SiO$_2$&TiO$_2$&Al$_2$O$_3$&FeO&MnO&MgO&CaO&Na$_2$O&K$_2$O\\\midrule
No reg. V.S. Ours&-&-&\cmark&-&\cmark&\cmark&\cmark&-&-&-\\\hline
$L_1$ V.S. Ours &\cmark&-&-&-&-&\xmark&\xmark&\cmark&\cmark&-\\\hline
$L_2$ V.S. Ours&-&-&-&-&\cmark&-&-&-&-&\xmark\\\hline
Dropout V.S. Ours&\cmark&-&\xmark&\cmark&\cmark&-&-&-&\cmark&\cmark\\\hline
\end{tabular}
}\vspace{0.2cm}
\textbf{SuperCam results}
\resizebox{0.49\textwidth}{!}{
\begin{tabular}{c|ccccccccc}\hline 
\backslashbox{Method}{Oxide}&All Oxides&SiO$_2$&TiO$_2$&Al$_2$O$_3$&FeO&MgO&CaO&Na$_2$O&K$_2$O\\\midrule
No regularization&2.67&\textbf{4.80}&0.69&2.42&\textbf{2.58}&1.30&\textbf{1.33}&0.71&\textbf{0.82}\\
$f$-divergence (proposed) &\textbf{2.66}&4.90&\textbf{0.61}&\textbf{2.27}&2.70&\textbf{1.11}&1.40&\textbf{0.65}&0.84
\\\hline
$L_1$ &\textbf{2.65}&4.91&\textbf{0.60}&\textbf{2.25}&\textbf{2.66}&1.28&\textbf{1.32}&0.69&\textbf{0.83}\\
$f$-divergence (proposed) &2.66&\textbf{4.90}&0.61&2.27&2.70&\textbf{1.11}&1.40&\textbf{0.65}&0.84\\\hline
$L_2$&\textbf{2.66}&4.96&\textbf{0.55}&\textbf{2.24}&\textbf{2.50}&1.16&\textbf{1.33}&\textbf{0.65}&0.86\\
$f$-divergence (proposed) &\textbf{2.66}&\textbf{4.90}&0.61&2.27&2.70&\textbf{1.11}&1.40&\textbf{0.65}&\textbf{0.84}\\\hline
Dropout&2.69&\textbf{4.83}&\textbf{0.59}&2.34&2.71&1.37&\textbf{1.40}&0.68&\textbf{0.84}\\
$f$-divergence (proposed) &\textbf{2.66}&4.90&0.61&\textbf{2.27}&\textbf{2.70}&\textbf{1.11}&\textbf{1.40}&\textbf{0.65}&\textbf{0.84}
\\\hline
\end{tabular}
}\vspace{0.2cm}
\resizebox{0.49\textwidth}{!}{
\begin{tabular}{c|ccccccccc}\hline 
\backslashbox{Method}{Oxide}&All Oxides&SiO$_2$&TiO$_2$&Al$_2$O$_3$&FeO&MgO&CaO&Na$_2$O&K$_2$O\\\midrule
No reg. V.S. Ours &-&-&\cmark&\cmark&-&\cmark&-&\cmark&-\\\hline
$L_1$ V.S. Ours  &-&-&-&-&-&\cmark&-&\cmark&-\\\hline
$L_2$ V.S. Ours&-&-&\xmark&-&-&\cmark&-&-&-\\\hline
Dropout V.S. Ours&-&-&-&-&-&\cmark&-&\cmark&-
\\\hline
\end{tabular}
}
\label{TableChemCamAllReg}
\end{table}
\subsection{Quantitative comparisons between $f$-divergence regularization and various regularization methods}
\label{secexpanalysis}
This section presents the quantitative evaluations of various methods, including training a network without any regularization, as well as with $L_1$, $L_2$, dropout~\cite{srivastava2014dropout} and the proposed $f$-divergence regularization. Table~\ref{TableChemCamAllReg} summarizes these evaluations, comparing the RMSEs of $f$-divergence with those of other methods. Furthermore, Table~\ref{TableChemCamAllReg} includes t-test results to indicate whether the observed performance improvements are statistically significant.

\begin{table*}[h!]
\centering
\caption{RMSE comparison between standard regularization and standard regularization combined with $f$-divergence regularization across various regularization strengths. Smaller RMSE values are highlighted. Paired t-test results indicate statistical significance: \cmark\text{ } indicates that the combined method has significantly smaller RMSE, \textbf{-} indicates no significant difference, and \xmark\text{ } indicates that the independent use of standard regularization has significantly smaller RMSE. A significant level $0.1$ is used for the t-test.} 
\textbf{ChemCam results}\hspace{7.4cm}\textbf{SuperCam results}\\
\resizebox{0.51\textwidth}{!}{
\begin{tabular}{c|cccccccccc} \hline
 \backslashbox{Method}{Oxide}&All Oxides&SiO$_2$&TiO$_2$&Al$_2$O$_3$&FeO&MnO&MgO&CaO&Na$_2$O&K$_2$O\\\midrule
$L_1$ with strength $0.0001$ &\textbf{2.30}&\textbf{4.09}&\textbf{0.62}&1.94&\textbf{2.38}&1.18&1.07&\textbf{1.39}&0.64&\textbf{0.64}\\
Combined (Ours)   &2.36&4.46&0.67&\textbf{1.85}&2.42&\textbf{1.00}&\textbf{0.92}&1.40&\textbf{0.57}&0.65\\\hline
$L_1$ with strength $0.0003$ &2.43&4.42&\textbf{0.61}&1.96&2.38&1.26&\textbf{1.03}&1.54&\textbf{0.58}&\textbf{0.65}\\
Combined (Ours)  &\textbf{2.35}&\textbf{4.30}&0.64&\textbf{1.89}&\textbf{2.23}&\textbf{0.96}&1.25&\textbf{1.48}&0.64&\textbf{0.65}\\\hline
$L_1$ with strength $0.0005$ &2.64&5.05&\textbf{0.61}&2.07&\textbf{2.54}&\textbf{1.18}&\textbf{1.19}&1.54&\textbf{0.55}&0.68\\
Combined (Ours) &\textbf{2.52}&\textbf{4.55}&0.65&\textbf{2.05}&2.66&1.23&1.22&\textbf{1.45}&0.67&\textbf{0.67}\\\hline
$L_1$ with strength $0.0007$ &2.71&5.10&\textbf{0.67}&2.21&2.60&1.29&1.28&\textbf{1.64}&0.67&0.83\\
Combined (Ours)  &\textbf{2.63}&\textbf{4.91}&0.71&\textbf{2.04}&\textbf{2.56}&\textbf{1.17}&\textbf{1.15}&\textbf{1.64}&\textbf{0.62}&\textbf{0.73}
\\\hline
\end{tabular}
}
\hfill 
\resizebox{0.47\textwidth}{!}{
\begin{tabular}{c|ccccccccc} \hline
 \backslashbox{Method}{Oxide}&All Oxides&SiO$_2$&TiO$_2$&Al$_2$O$_3$&FeO&MgO&CaO&Na$_2$O&K$_2$O\\\midrule
$L_1$ with strength $0.0001$ &2.68&4.98&\textbf{0.58}&2.28&2.66&\textbf{1.19}&1.36&0.68&0.87\\
Combined (Ours)  &\textbf{2.54}&\textbf{4.72}&0.66&\textbf{2.25}&\textbf{2.39}&1.22&\textbf{1.29}&\textbf{0.64}&\textbf{0.79}\\\hline
$L_1$ with strength $0.0003$ &2.69&\textbf{4.74}&0.66&2.33&2.71&1.37&1.52&0.68&0.86\\
Combined (Ours)&\textbf{2.59}&4.77&\textbf{0.65}&\textbf{2.19}&\textbf{2.52}&\textbf{1.28}&\textbf{1.25}&\textbf{0.66}&\textbf{0.82}\\\hline
$L_1$ with strength $0.0005$ &2.83&5.40&\textbf{0.68}&\textbf{2.33}&\textbf{2.74}&\textbf{1.31}&\textbf{1.43}&\textbf{0.73}&\textbf{0.78}\\
Combined (Ours)  &\textbf{2.75}&\textbf{4.97}&\textbf{0.68}&2.40&2.80&1.35&1.50&0.74&0.83\\\hline
$L_1$ with strength $0.0007$ &2.86&\textbf{5.06}&\textbf{0.65}&2.41&2.88&1.61&1.59&0.75&0.84\\
Combined (Ours) &\textbf{2.76}&5.22&0.71&\textbf{2.10}&\textbf{2.69}&\textbf{1.37}&\textbf{1.46}&\textbf{0.67}&\textbf{0.78}
\\\hline
\end{tabular}
}\vspace{0.2cm}
\resizebox{0.51\textwidth}{!}{
\begin{tabular}{c|cccccccccc} \hline
 \backslashbox{Method}{Oxide}&All Oxides&SiO$_2$&TiO$_2$&Al$_2$O$_3$&FeO&MnO&MgO&CaO&Na$_2$O&K$_2$O\\\midrule
$L_1$ with strength $0.0001$ V.S. Combined (Ours)&\xmark&\xmark&\xmark&\cmark&-&\cmark&\cmark&-&\cmark&-\\\hline
$L_1$ with strength $0.0003$ V.S. Combined (Ours)&\cmark&-&\xmark&-&\cmark&\cmark&\xmark&-&\xmark&-\\\hline
$L_1$ with strength $0.0005$ V.S. Combined (Ours) &\cmark&\cmark&\xmark&-&\xmark&-&-&\cmark&\xmark&-\\\hline
$L_1$ with strength $0.0007$ V.S. Combined (Ours) &\cmark&-&\xmark&\cmark&-&\cmark&\cmark&-&\cmark&\cmark\\\hline
\end{tabular}
}
\hfill
\resizebox{0.47\textwidth}{!}{
\begin{tabular}{c|ccccccccc} \hline
 \backslashbox{Method}{Oxide}&All Oxides&SiO$_2$&TiO$_2$&Al$_2$O$_3$&FeO&MgO&CaO&Na$_2$O&K$_2$O\\\midrule
$L_1$ with strength $0.0001$ V.S. Combined (Ours) &\cmark&\cmark&\xmark&-&\cmark&-&\cmark&\cmark&\cmark\\\hline
$L_1$ with strength $0.0003$ V.S. Combined (Ours) &\cmark&-&-&\cmark&\cmark&\cmark&\cmark&-&-\\\hline
$L_1$ with strength $0.0005$ V.S. Combined (Ours)&\cmark&\cmark&-&-&-&-&\xmark&-&\xmark\\\hline
$L_1$ with strength $0.0007$ V.S. Combined (Ours)&\cmark&-&\xmark&\cmark&\cmark&\cmark&\cmark&\cmark&\cmark\\\hline
\end{tabular}
}\\
\vspace{0.1cm}
(a) Comparison between $L_1$ and $L_1$ combined with $f$-divergence regularization for various $L_1$ strengths.\\
\vspace{0.4cm}
\resizebox{0.51\textwidth}{!}{
\begin{tabular}{c|cccccccccc} \hline
 \backslashbox{Method}{Oxide}&All Oxides&SiO$_2$&TiO$_2$&Al$_2$O$_3$&FeO&MnO&MgO&CaO&Na$_2$O&K$_2$O\\\midrule
$L_2$ with strength $0.0001$ &2.34&4.30&0.65&1.87&2.47&1.19&1.07&1.35&0.61&\textbf{0.58}\\
Combined (Ours)   &\textbf{2.25}&\textbf{4.12}&\textbf{0.64}&\textbf{1.82}&\textbf{2.32}&\textbf{1.12}&\textbf{1.00}&\textbf{1.30}&\textbf{0.58}&0.64\\
\hline
$L_2$ with strength $0.0003$ &2.33&\textbf{4.24}&\textbf{0.64}&1.97&\textbf{2.36}&1.11&1.03&1.37&\textbf{0.55}&0.67\\
Combined (Ours)  &\textbf{2.30}&4.26&0.65&\textbf{1.87}&2.41&\textbf{1.07}&\textbf{0.97}&\textbf{1.26}&0.55&\textbf{0.62}\\
\hline
$L_2$ with strength $0.0005$ &\textbf{2.26}&\textbf{4.03}&0.61&\textbf{1.88}&2.33&1.20&1.12&1.26&\textbf{0.57}&\textbf{0.60}\\
Combined (Ours)  &2.26&4.15&\textbf{0.60}&1.90&\textbf{2.29}&\textbf{1.20}&\textbf{0.96}&\textbf{1.29}&0.59&0.66\\
\hline
$L_2$ with strength $0.0007$ &2.36&4.29&0.64&2.00&2.37&1.17&\textbf{0.98}&1.41&\textbf{0.58}&0.70\\
Combined (Ours)  &\textbf{2.26}&\textbf{4.09}&\textbf{0.60}&\textbf{1.77}&\textbf{2.43}&\textbf{1.17}&1.04&\textbf{1.34}&0.58&\textbf{0.57}\\\hline
\end{tabular}
}
\hfill 
\resizebox{0.47\textwidth}{!}{
\begin{tabular}{c|ccccccccc} \hline
 \backslashbox{Method}{Oxide}&All Oxides&SiO$_2$&TiO$_2$&Al$_2$O$_3$&FeO&MgO&CaO&Na$_2$O&K$_2$O\\\midrule
$L_2$ with strength $0.0001$ &2.62&\textbf{4.67}&0.68&2.35&\textbf{2.65}&1.19&\textbf{1.37}&\textbf{0.62}&\textbf{0.80}\\
Combined (Ours)   &\textbf{2.59}&4.69&\textbf{0.59}&\textbf{2.17}&2.66&\textbf{1.10}&1.38&0.67&0.85\\\hline
$L_2$ with strength $0.0003$ &2.69&4.93&\textbf{0.63}&2.27&2.77&\textbf{1.25}&\textbf{1.31}&\textbf{0.67}&0.85\\
Combined (Ours)  &\textbf{2.58}&\textbf{4.79}&0.65&\textbf{2.18}&\textbf{2.49}&\textbf{1.25}&1.32&0.73&\textbf{0.81}\\\hline
$L_2$ with strength $0.0005$ &2.62&\textbf{4.63}&\textbf{0.62}&\textbf{2.22}&2.71&1.26&1.37&0.71&0.88\\
Combined (Ours)  &\textbf{2.59}&4.65&0.67&2.38&\textbf{2.54}&\textbf{1.18}&\textbf{1.34}&\textbf{0.65}&\textbf{0.82}\\\hline
$L_2$ with strength $0.0007$ &2.72&4.88&0.61&2.25&2.79&1.34&1.35&0.66&0.87\\
Combined (Ours)  &\textbf{2.50}&\textbf{4.47}&\textbf{0.55}&\textbf{2.22}&\textbf{2.41}&\textbf{1.25}&\textbf{1.32}&\textbf{0.62}&\textbf{0.84}
\\\hline
\end{tabular}
}\vspace{0.2cm}
\resizebox{0.51\textwidth}{!}{
\begin{tabular}{c|cccccccccc} \hline
 \backslashbox{Method}{Oxide}&All Oxides&SiO$_2$&TiO$_2$&Al$_2$O$_3$&FeO&MnO&MgO&CaO&Na$_2$O&K$_2$O\\\midrule
$L_2$ with strength $0.0001$ V.S. Combined (Ours)&\cmark&\cmark&-&-&\cmark&\cmark&\cmark&-&-&\xmark\\\hline
$L_2$ with strength $0.0003$ V.S. Combined (Ours)&-&-&-&\cmark&-&-&\cmark&\cmark&-&\cmark\\\hline
$L_2$ with strength $0.0005$ V.S. Combined (Ours) &-&-&-&-&-&-&\cmark&-&-&\xmark\\\hline
$L_2$ with strength $0.0007$ V.S. Combined (Ours) &\cmark&\cmark&-&\cmark&-&-&\xmark&\cmark&\xmark&\cmark\\\hline
\end{tabular}
}
\hfill
\resizebox{0.47\textwidth}{!}{
\begin{tabular}{c|ccccccccc} \hline
 \backslashbox{Method}{Oxide}&All Oxides&SiO$_2$&TiO$_2$&Al$_2$O$_3$&FeO&MgO&CaO&Na$_2$O&K$_2$O\\\midrule
$L_2$ with strength $0.0001$ V.S. Combined (Ours) &-&-&\cmark&\cmark&-&\cmark&-&\xmark&\xmark\\\hline
$L_2$ with strength $0.0003$ V.S. Combined (Ours) &\cmark&-&-&-&\cmark&-&-&\xmark&-\\\hline
$L_2$ with strength $0.0005$ V.S. Combined (Ours)&-&-&\xmark&\xmark&\cmark&\cmark&-&\cmark&\cmark\\\hline
$L_2$ with strength $0.0007$ V.S. Combined (Ours)&\cmark&\cmark&\cmark&-&\cmark&\cmark&-&-&-\\\hline
\end{tabular}
}\\
\vspace{0.1cm}
(b) Comparison between $L_2$ and $L_2$ combined with $f$-divergence regularization for various $L_2$ strengths. \\
\vspace{0.4cm}
\resizebox{0.51\textwidth}{!}{
\begin{tabular}{c|cccccccccc} \hline
 \backslashbox{Method}{Oxide}&All Oxides&SiO$_2$&TiO$_2$&Al$_2$O$_3$&FeO&MnO&MgO&CaO&Na$_2$O&K$_2$O\\\midrule
Dropout with rate $0.04$ & 2.34 & \textbf{4.17} & \textbf{0.58} & 1.94 & 2.47 & \textbf{1.19} & \textbf{1.03} & 1.39 & 0.60 & \textbf{0.63} \\
Combined (Ours) & \textbf{2.30} & 4.21 & 0.62 & \textbf{1.85} & \textbf{2.32} & 1.23 & 1.15 & \textbf{1.29} & \textbf{0.59} & 0.68 \\
\hline
Dropout with rate $0.06$ & 2.47 & 4.54 & \textbf{0.61} & \textbf{1.99} & 2.60 & \textbf{1.19} & \textbf{1.04} & 1.43 & \textbf{0.56} & 0.73 \\
Combined (Ours) & \textbf{2.43} & \textbf{4.35} & 0.63 & 2.10 & \textbf{2.25} & 1.26 & 1.23 & \textbf{1.52} & 0.61 & \textbf{0.67} \\
\hline
Dropout with rate $0.08$ & \textbf{2.58} & 4.57 & 0.68 & 2.21 & \textbf{2.54} & \textbf{1.21} & \textbf{1.33} & \textbf{1.61} & 0.71 & \textbf{0.70} \\
Combined (Ours) & 2.60 & \textbf{4.40} & \textbf{0.65} & \textbf{2.16} & 2.73 & 1.26 & 1.34 & 1.72 & \textbf{0.67} & 0.75 \\
\hline
Dropout with rate $0.1$ & 2.83 & 4.87 & 0.67 & 2.46 & 2.75 & \textbf{1.39} & 1.59 & 1.84 & 0.75 & \textbf{0.76} \\
Combined (Ours) & \textbf{2.61} & \textbf{4.55} & \textbf{0.63} & \textbf{2.07} & \textbf{2.48} & 1.40 & \textbf{1.33} & \textbf{1.80} & \textbf{0.69} & 0.79
\\\hline
\end{tabular}
}
\hfill 
\resizebox{0.47\textwidth}{!}{
\begin{tabular}{c|ccccccccc} \hline
 \backslashbox{Method}{Oxide}&All Oxides&SiO$_2$&TiO$_2$&Al$_2$O$_3$&FeO&MgO&CaO&Na$_2$O&K$_2$O\\\midrule
Dropout with rate $0.04$ & 2.59 & 4.64 & 0.64 & \textbf{2.16} & 2.58 & 1.39 & \textbf{1.47} & 0.67 & \textbf{0.82} \\
Combined (Ours) & \textbf{2.57} & \textbf{4.61} & \textbf{0.54} & 2.21 & \textbf{2.48} & \textbf{1.26} & 1.52 & \textbf{0.65} & \textbf{0.82} \\
\hline
Dropout with rate $0.06$ & 2.71 & 4.73 & 0.74 & 2.44 & 2.52 & 1.41 & 1.56 & \textbf{0.69} & 0.86 \\
Combined (Ours) & \textbf{2.62} & \textbf{4.71} & \textbf{0.63} & \textbf{2.21} & \textbf{2.49} & \textbf{1.20} & \textbf{1.50} & 0.77 & \textbf{0.84} \\
\hline
Dropout with rate $0.08$ & 2.84 & 4.93 & \textbf{0.64} & 2.71 & \textbf{2.65} & 1.46 & \textbf{1.58} & 0.70 & \textbf{0.86} \\
Combined (Ours) & \textbf{2.79} & \textbf{4.74} & 0.65 & \textbf{2.69} & 2.66& \textbf{1.41} & 1.71 & \textbf{0.64} & 0.87 \\
\hline
Dropout with rate $0.1$ & \textbf{2.88} & \textbf{4.91} & 0.65 & 2.63 & \textbf{2.54} & \textbf{1.58} & \textbf{1.81} & 0.88 & \textbf{0.83} \\
Combined (Ours) & 2.89 & 4.99 & \textbf{0.60} & \textbf{2.44} & 2.71 & 1.65 & 1.91 & \textbf{0.83} & 0.90
\\\hline
\end{tabular}
}\vspace{0.2cm}
\resizebox{0.51\textwidth}{!}{
\begin{tabular}{c|cccccccccc} \hline
 \backslashbox{Method}{Oxide}&All Oxides&SiO$_2$&TiO$_2$&Al$_2$O$_3$&FeO&MnO&MgO&CaO&Na$_2$O&K$_2$O\\\midrule
Dropout with rate $0.04$ V.S. Combined (Ours)&\cmark&-&\xmark&\cmark&\cmark&-&\xmark&\cmark&-&\xmark\\\hline
Dropout with rate $0.06$ V.S. Combined (Ours)&\cmark&\cmark&-&\xmark&\cmark&\xmark&\xmark&\xmark&\xmark&\cmark\\\hline
Dropout with rate $0.08$ V.S. Combined (Ours) &-&\cmark&\cmark&-&\xmark&\xmark&-&\xmark&\cmark&\xmark\\\hline
Dropout with rate $0.1$ V.S. Combined (Ours) &\cmark&\cmark&\cmark&\cmark&\cmark&-&\cmark&-&\cmark&-\\\hline
\end{tabular}
}
\hfill
\resizebox{0.47\textwidth}{!}{
\begin{tabular}{c|ccccccccc} \hline
 \backslashbox{Method}{Oxide}&All Oxides&SiO$_2$&TiO$_2$&Al$_2$O$_3$&FeO&MgO&CaO&Na$_2$O&K$_2$O\\\midrule
Dropout with rate $0.04$ V.S. Combined (Ours) &-&-&\cmark&-&-&\cmark&-&-&-\\\hline
Dropout with rate $0.06$ V.S. Combined (Ours) &\cmark&-&\cmark&\cmark&-&\cmark&-&\xmark&-\\\hline
Dropout with rate $0.08$ V.S. Combined (Ours)&-&\cmark&-&-&-&-&\xmark&\cmark&-\\\hline
Dropout with rate $0.1$ V.S. Combined (Ours)&-&-&\cmark&\cmark&\xmark&-&\xmark&\cmark&\xmark\\\hline
\end{tabular}
}\\
\vspace{0.1cm}
(c) Comparison between dropout and dropout combined with $f$-divergence regularization for various dropout rates.
\label{TableRegMixture}
\end{table*}
In the evaluation using ChemCam data, when considering the~\textit{predictive performance averaged across all nine oxides}, the $f$-divergence regularization achieves a smaller RMSE (reported under the ``all oxides'') compared to no-regularization, $L_1$, $L_2$ and dropout. Additionally, at least~\textit{six out of the nine oxide-weight predictions} using $f$-divergence regularization exhibit lower RMSE. We have also conducted paired t-test to compare the mean differences in RMSE between $f$-regularization and other methods. These comparisons were made using pairs of prediction results for each rock sample in the test set. A significance level of $0.1$ was used for the t-tests. The t-test results indicate that $f$-divergence regularization has significantly smaller RMSE than $L_1$ and dropout, when considering the \textit{average predictive performance across all nine oxide weights}. Furthermore, the t-test results reveal that $f$-divergence produces significantly smaller RMSEs  than the compared methods, particularly  no regularization and dropout, in multiple~\textit{predictions of individual oxide weights}.

In the evaluation using SuperCam data, the~\textit{predictive performances averaged across all nine-oxide weights}, reported by RMSE under ``all oxides'', are comparable between $f$-divergence regularization and the other methods. T-test results also indicate no significant differences in these RMSE comparisons are found when considering the predictive performance averaged across all oxide weights. In contrast, $f$-divergence regularization has produced substantially smaller RMSEs in multiple~\textit{predictions of individual oxide weights} compared with other methods. Our t-test results show that these RMSEs achieved by $f$-divergence regularization are significantly smaller than those of no regularization, $L_1$ and dropout methods. In summary, $f$-divergence regularization is generally better or not worse than the compared methods for most comparisons.

\subsection{Quantitative evaluations on the combination of $f$-divergence and standard regularization methods}
\label{ExpMixtures}
In this section, we evaluate the performance of combining standard regularization methods, including $L_1$, $L_2$, and dropout, with the proposed $f$-divergence regularization using the Chemcam and SuperCam datasets.  We also compare these results with the independent use of standard regularization methods. Table~\ref{TableRegMixture} presents this comparison across various regularization strength groups: $[0.0001, 0.0003, 0.0005, 0.0007]$ for $L_1$ and $L_2$, and dropout rates of $[0.04, 0.06, 0.08, 0.1]$. A significance level of $0.1$ is used for generating the t-test results.

Table~\ref{TableRegMixture} (a) presents an RMSE comparison between $L_1$ and the combination of $L_1$ with $f$-divergence regularization, demonstrating that the combined method achieves smaller RMSE values than the independent use of $L_1$ in most comparisons for the \textit{predictive performance averaged across all oxides}. Additionally, t-test results confirm the statistical significance of these improvements. Notably, the combined method also produces comparable or significantly smaller RMSE values for \textit{predictions of individual oxide weights}, particularly at $L_1$ strengths of $0.0001$ and $0.0007$. This is further supported by t-test results, which indicate that combining $L_1$ with $f$-divergence regularization yields significantly smaller RMSE than the independent use of $L_1$ across most comparisons within the various $L_1$ strength groups for the multi-oxide weight predictions.

Table~\ref{TableRegMixture}(b) and (c) present comparisons similar to Table~\ref{TableRegMixture}(a), however the standard regularization $L_1$ is replaced with $L_2$ and dropout regularization. In these comparisons, we have observed that, \textit{in the predictive performance averaged across all oxides}, combining $L_2$ and dropout with $f$-divergence regularization consistently produces smaller RMSE than the independent use of $L_2$ and dropout, respectively. The t-test results also demonstrate that these RMSE values by the combined methods are either comparable to or significantly smaller than those of the independent use of $L_2$ and dropout. In addition to the consistent predictive performance averaged across all oxides, combining $L_2$ and dropout with $f$-divergence regularization also produces smaller or comparable RMSE results in the \textit{predictions of individual oxide weights} in multiple comparisons with the independent use of $L_2$ and dropout, respectively. These results are pronounced in the $L_2$ regularization comparison using ChemCam and SuperCam, and the dropout comparison using SuperCam. T-test results indicate that the combined method produces significantly smaller RMSE than the independent use of $L_2$ and dropout in most of these comparisons.

In summary, Table~\ref{TableRegMixture} demonstrates that combining the standard regularization methods with $f$-divergence regularization improves or maintains the predictive performance averaged across all oxides in most comparisons with the independent use of the standard regularization methods. Additionally, in the prediction of a single oxide-weight, the combined method outperforms the independent use of the standard regularization using the SuperCam data in the prediction of weights for multiple oxides, and presents comparable performance using the ChemCam data.
\subsection{Ablation Study}
We observed in Table~\ref{TableRegMixture} for ChemCam results that combining standard regularization with $f$-divergence regularization produces higher RMSE than the independent use of standard regularization in \textit{some comparisons} for predicting weights of multiple oxides. This effect is particularly pronounced in the ChemCam experiment with a dropout rate of $0.06$, where t-test results show that independently using dropout leads to significantly better predictions for five out of nine oxides compared to the combined method.

To investigate these underperforming results of the combined method, we conducted an ablation study. We hypothesize that these results are caused by using an MSE averaged across all oxide-weight errors as the primary loss function to train and validate the neural network. Consequently, the combined method appears more consistent in the predictive performance averaged across all oxides than in predicting the weight of a single oxide. To test this hypothesis, we trained neural networks with an MSE calculated only for the weight prediction error of a single oxide. We selected the oxides Al$_2$O$_3$, MnO, MgO, CaO, and Na$_2$O, training five neural networks with either dropout or the combined method (combining $f$-divergence regularization with dropout), using a dropout rate of $0.06$, to predict their respective oxide weights.
\begin{table}[h]
\centering
\caption{RMSE comparison between dropout and the combined method with a dropout rate of $0.06$ for single-oxide weight prediction networks. The paired t-test results are also provided: \cmark\text{ } denotes that the combined method has significantly smaller RMSE, \textbf{-} indicates no significant difference, and \xmark\text{ } signifies that the independent use of dropout has significantly smaller RMSE. A significance level of $0.1$ is used for the t-test.}
\resizebox{0.3\textwidth}{!}{
\begin{tabular}{c|ccccc} \hline
 \backslashbox{Method}{Oxide}&Al$_2$O$_3$&MnO&MgO&CaO&Na$_2$O\\\midrule
Dropout with rate $0.06$&\textbf{1.59}&0.91&\textbf{0.88}&1.15&0.55\\
Combined (Ours)&\textbf{1.59}&\textbf{0.76}&\textbf{0.88}&\textbf{1.14}&\textbf{0.52}
\\\hline
\end{tabular}
}
\resizebox{0.18\textwidth}{!}{
\begin{tabular}{ccccc} \hline
Al$_2$O$_3$&MnO&MgO&CaO&Na$_2$O\\\midrule
-&\cmark&-&-&\cmark
\\\hline
\end{tabular}
}
\label{ExpAblationStudy}
\end{table}
As shown in Table~\ref{TableRegMixture} (c) for the ChemCam results, the combined method underperforms compared to the independent use of dropout with a dropout rate of $0.06$ when an MSE averaged across all oxides is used as the primary loss function during training. However, the results from Table~\ref{ExpAblationStudy} for our ablation study reveal the opposite: the combined method produces significantly smaller RMSE than dropout for MnO and Na$_2$O and delivers comparable performance for the other oxides. These findings suggest that if the goal is to predict the weight of a specific oxide, it is more effective to train a single-oxide weight prediction network using the combined method rather than training a multi-oxide weight prediction network.

\section{Conclusion}
This paper addresses the critical challenge of performing spectroscopic analysis to characterize the oxide composition of rock samples collected from Martian-like environments. Accurate chemical characterization of these rock samples is essential for various downstream planetary science tasks, such as uncovering astro-biological activities. To address the dichotomy between the data-hungry nature of convolutional neural networks (CNNs) and the limited availability of spectroscopic data, we propose an innovative regularization method based on $f$-divergence. This novel regularization explicitly enforces a divergence constraint between model predictions and targets, effectively mitigating overfitting. Our experimental results demonstrate that incorporating $f$-divergence regularization into CNN training provides significant benefits compared to standard regularization methods, such as $L_1$, $L_2$, and dropout. More importantly, $f$-regularization outperforms these methods in prediction accuracy particularly when averaging across all oxide weights in experiments using ChemCam and SuperCam data. This enhanced performance could enable precise mineralogical profiling of rock samples, which demands high-accuracy characterization of multiple oxide compositions simultaneously.
\section*{Acknowledgments}
Presented research was supported by the Laboratory Directed Research and Development program of Los Alamos National Laboratory under project number 20240065DR.
\bibliography{Reference}
\bibliographystyle{IEEEtran}

\newpage


\end{document}